\def\eqref#1{(\ref{#1})}
\def\ceil#1{\lceil #1 \rceil}
\def\floor#1{\lfloor #1 \rfloor}
\def\1{\bm{1}}
\def\mT{{\bm{T}}}
\DeclareMathAlphabet{\mathsfit}{\encodingdefault}{\sfdefault}{m}{sl}
\SetMathAlphabet{\mathsfit}{bold}{\encodingdefault}{\sfdefault}{bx}{n}
\DeclareMathOperator*{\argmax}{arg\,max}
\newtheorem{theorem}{Theorem}
\newtheorem{lemma}{Lemma}
\newtheorem{assumption}{Assumption}
\newcommand{\argtopk}{\mathop{\mathrm{arg\,top}\text{-}k}}
\title{Necessary and Sufficient Watermark for Large Language Models}
\author{\name Yuki Takezawa \email yuki-takezawa@ml.ist.i.kyoto-u.ac.jp \\
      \addr Kyoto University, OIST
      \AND
      \name Ryoma Sato \email rsato@nii.ac.jp \\
      \addr NII
      \AND
      \name Han Bao \email bao@i.kyoto-u.ac.jp \\
      \addr Kyoto University, OIST
      \AND
      \name Kenta Niwa \email kenta.niwa.bk@hco.ntt.co.jp \\
      \addr NTT Communication Science Laboratories
      \AND
      \name Makoto Yamada \email makoto.yamada@oist.jp \\
      \addr OIST
}
\begin{document}

\maketitle

\begin{abstract}
Large language models (LLMs) can now generate texts that are indistinguishable from those written by humans.
Such remarkable performance of LLMs increases their risk of being used for malicious purposes. 
Thus, it is necessary to develop methods for distinguishing texts written by LLMs from those written by humans.
Watermarking is one of the most powerful methods for achieving this.
Although existing methods have successfully detected texts generated by LLMs,
they inevitably degrade the text quality.
In this study, we propose the Necessary and Sufficient Watermark (NS-Watermark) for inserting watermarks into generated texts with minimum text quality degradation.
More specifically, we derive minimum constraints required to be imposed on the generated texts to distinguish whether LLMs or humans write the texts, and we formulate the NS-Watermark as a constrained optimization problem.
Through the experiments, we demonstrate that the NS-Watermark can generate more natural texts than existing watermarking methods
and distinguish more accurately between texts written by LLMs and those written by humans.
Especially in machine translation tasks,
the NS-Watermark can outperform the existing watermarking method by up to $30$ BLEU scores.
\end{abstract}

\section{Introduction}
Large language models (LLMs) have achieved remarkable performances in a wide range of NLP tasks, 
including language generation \citep{chen2021evaluating}, question answering \citep{joshi2017triviaqa,kwiatkowski2019natural}, and reasoning tasks \citep{bisk2020piqa,kojima2022large}.
Recently, many pre-trained LLMs have been released \citep{brown2020language,chung2022scaling,zhang2022opt,touvron2023llama},
which can now generate natural and fluent texts that are indistinguishable from texts written by humans.
For instance, \citet{brown2020language} evaluated the quality of the news articles generated by GPT-3,
demonstrating that humans can hardly distinguish between news articles generated by GPT-3 and those written by humans.

As the performance of LLMs improves for various tasks, 
the risk that LLMs are used for malicious purposes, such as generating fake news, also increases \citep{zellers2019defending}.
In addition to fake news, it can be a serious concern to use bots to try to manipulate election campaigns or to use LLMs to generate academic papers \citep{kirchenbauer2023watermark}.
Thus, it is crucial to develop methods to identify whether LLMs or humans write texts.
Watermarking is one of the powerful techniques for this purpose,
which inserts information into texts such that the inserted information is imperceptible to humans and can be easily identified by some algorithms \citep{venugopal2011watermarking,he2021protecting,he2022cater,zhao2023protecting,kuditipudi2023robust,zhao2023provable,kirchenbauer2023watermark,kirchenbauer2023reliability,christ2023undetectable}.
Recently, \citet{kirchenbauer2023watermark} proposed the Hard/Soft-Watermark,
which inserts watermarks by generating text using only a subset of vocabulary.\footnote{We coined the name ``Hard/Soft-Watermark'' to refer to the watermarking methods by \citet{kirchenbauer2023watermark}.}
Texts generated by the watermarked LLMs consist only of a subset of vocabulary, whereas texts written by humans consist of an entire vocabulary.
Thus, we can identify whether LLMs or humans write texts using statistical hypothesis testing.
\citet{kirchenbauer2023watermark} demonstrated that texts generated by LLMs with the Hard/Soft-Watermark can be distinguished from human-written texts almost perfectly.
However, the generated texts are often low-quality because LLMs generate texts with only a subset of vocabulary.

In this study, we propose a novel method for inserting watermarks into generated text without sacrificing both text quality and detection accuracy,
which we refer to as the \textbf{Necessary and Sufficient Watermark (NS-Watermark)}.
Our method is based on the observation that the constraint imposed by the Hard/Soft-Watermark is overly conservative for identifying LLM-generated texts, especially when the generated texts are long.
Hence, we derive minimum constraints required to be imposed on the generated texts to detect LLM-generated texts.
We find that the constraints on the generated text can be relaxed without decreasing the detection accuracy as the length of the generated text increases.
Based on this observation, we propose the NS-Watermark, 
which can change the constraints according to the length and impose minimum constraints on the generated text.
Owing to the minimum constraints, the text generated with the NS-Watermark can be more natural than the text with the Hard/Soft-Watermark.
We experimentally evaluate the effectiveness of the NS-Watermark
and demonstrate that the NS-Watermark can outperform the Soft-Watermark in terms of both text quality and detection accuracy.
Particularly in the machine translation tasks,
we demonstrate that the NS-Watermark can outperform the Soft-Watermark by up to $30$ BLEU scores
and achieve competitive BLEU scores compared to conventional decoding methods without watermarks.

\section{Background}
In this section, we briefly describe the watermarking methods proposed by \citet{kirchenbauer2023watermark}.
Further discussions on related studies are deferred to Sec.~\ref{sec:related_work}.

\textbf{Hard-Watermark.}
Let $x_{\text{prompt}}$ be a prompt, $V$ be vocabulary, and $\gamma \in (0,1)$ be a hyperparameter.
Given a word $x_t$, 
using $x_t$ as the seed value,
we randomly split $V$ into two disjoint subsets: \emph{green words} $V^{\text{green}} (x_t)$ and \emph{red words} $V^{\text{red}} (x_t) (\coloneqq V \setminus V^{\text{green}} (x_t))$
such that $|V^{\text{green}} (x_t)| = \gamma |V|$. 
Then, the Hard-Watermark generates text as follows:
\begin{align}
    \label{eq:hard_watermark}
    \argmax_{x_{1:T}, T} p(x_{1:T} \mid x_{\text{prompt}}) 
    \;\; \text{s.t.} \;\; x_{t+1} \in V^{\text{green}}(x_t) \;\; (t = 1, 2, \cdots, T-1).
\end{align}
We randomly split vocabulary into red and green words, and humans do not know whether a word is green or red.
Thus, if humans write the text, the green words appear with probability $\gamma$,
whereas texts written by LLMs consist of only green words.
Thus, we can use statistical hypothesis testing to identify whether LLMs or humans write the text. Specifically, the null and alternative hypotheses are given as follows:
\begin{itemize}
    \item[$H_0$:] The green words appear in a text with probability $\gamma$.
    \item[$H_1$:] The green words appear in a text with a probability greater than $\gamma$.
\end{itemize}
When the null hypothesis is rejected, we conclude that the text is generated by LLMs.
The number of green words follows a binomial distribution in texts written by humans.
Thus, we can test this by checking whether the z-score of text $x_{1:T}$, defined below, exceeds a given threshold $Z$.
\begin{equation}
    z(x_{1:T}) \coloneqq \frac{|x_{1:T}|_{\mathrm{G}} - \gamma (T-1)}{\sqrt{\gamma (1-\gamma) (T-1)}},
\end{equation}
where $|x_{1:T}|_\mathrm{G} \coloneqq \left| \{ x_{t+1} \mid x_{t+1} \in V^{\text{green}} (x_t) \} \right|$.

\textbf{Soft-Watermark.}
Although the Hard-Watermark is a simple and efficient method for distinguishing LLM-generated texts from those written by humans,
the generated texts are often of low quality. 
This is partly because the constraints of the Hard-Watermark may prevent the generation of common phrases, e.g., ``Barack Obama,''
even if the probabilities of these phrases are very high 
just because ``Obama'' is not contained in $V^{\text{green}}(\text{``Barack''})$.
To mitigate this issue, \citet{kirchenbauer2023watermark} also proposed the Soft-Watermark:
instead of making all words contained in the generated text green words,
the Soft-Watermark adds an offset and increases the probability of generating green words.
This relaxation allows the Soft-Watermark to generate ``Barack Obama'' when the probability that ``Obama'' appears after ``Barack'' is high,
and the Soft-Watermark can generate higher-quality text than the Hard-Watermark.
However, the Soft-Watermark still suffers from low-quality text,
as we demonstrate in the experiments.

\section{Proposed Method}
\begin{figure*}[b!]
    \centering
    \begin{subfigure}{0.64\hsize}
        \centering
        \includegraphics[width=0.49\hsize]{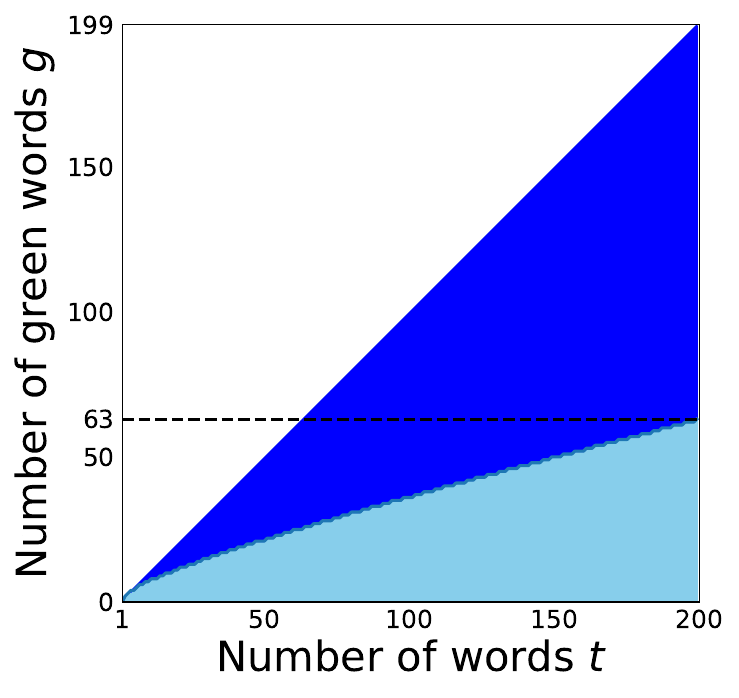}
        \includegraphics[width=0.49\hsize]{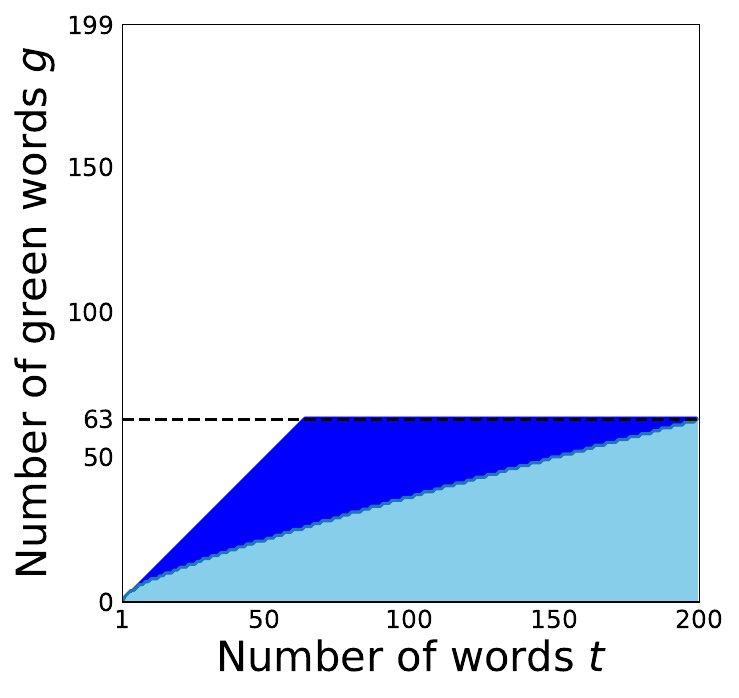}
        \caption{Illustration of Alg.~\ref{alg:ns_watermark}}
        \label{fig:alg1}
    \end{subfigure}
    \begin{subfigure}{0.32\hsize}
        \centering
        \includegraphics[width=\hsize]{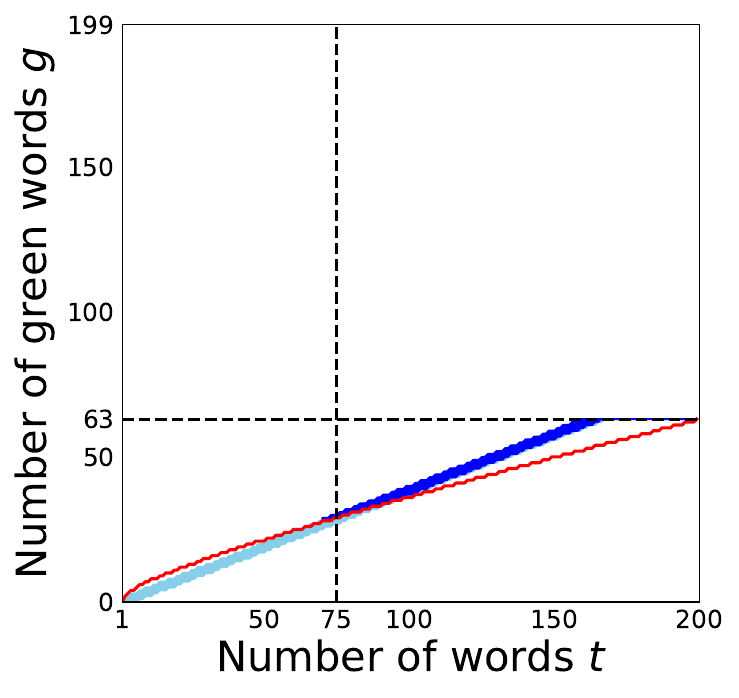}
        \caption{Illustration of Alg.~\ref{alg:fast_ns_watermark}}
        \label{fig:alg2}
    \end{subfigure}
    \vskip - 0.05 in
    \caption{Visualization of the table $\mT[t][g]$ for $T_{\text{max}}=200$, $\gamma=0.2$, $\widehat{T}=75$, $\alpha=2$, $Z=4$, and $G_{\text{max}}=63$. The areas colored in blue and light blue indicate the range in $\mT[t][g]$ where we need to calculate, and the areas colored in blue indicate the range that satisfies the constraint of Eq.~\eqref{eq:ns_watermark}. The red line indicates the minimum number of green words required to satisfy the constraint. Note that in the middle and right figures, $\mT[t][G_{\text{max}}]$ does not denote the set of texts of length $t$ containing $G_{\text{max}}$ green words, but denotes the set of texts containing at least $G_{\text{max}}$ green words. See Sec.~\ref{sec:additional_visual_explation} for figures with various $\gamma$.}
    \label{fig:dp}
\end{figure*}

\subsection{Necessary and Sufficient Watermark}
\label{sec:problem_formulation}
In this section, we show that the constraints of the Hard/Soft-Watermark are too restrictive 
and derive the minimum constraint to identify whether LLMs or humans write the text.
By rewriting Eq.~\eqref{eq:hard_watermark},
the Hard-Watermark is reformulated as follows:
\begin{equation}
\label{eq:hard_watermark2}
    \argmax_{x_{1:T}, T} p(x_{1:T} \mid x_{\text{prompt}}) \;\; \text{s.t.} \;\; \frac{|x_{1:T}|_\mathrm{G}}{T-1} = 1.
\end{equation}
Let $\hat{x}_{1:T}$ be the solution of Eq.~\eqref{eq:hard_watermark2}.
The z-score $z (\hat{x}_{1:T})$ is $\mathcal{O}(\sqrt{T})$, and we can identify whether LLMs or humans write the text by testing whether the z-score exceeds the hyperparameter $Z$.
However, the z-score $z (\hat{x}_{1:T})$ increases with the length of the generated text $T$,
whereas the threshold $Z$ remains constant.
Therefore, the above formulation in Eq.~\eqref{eq:hard_watermark2} imposes too restrictive constraint on the generated text,
especially when ensuring $z(x_{1:T}) \geq Z$ for long texts.

Alternatively, the following constraint is sufficient to ensure that the z-score of the generated text is greater than or equal to the threshold $Z$:
\begin{align}
\label{eq:ns_watermark}
    \argmax_{x_{1:T}, T} p(x_{1:T} \mid x_{\text{prompt}}) 
    \;\; \text{s.t.} \;\; \frac{|x_{1:T}|_\mathrm{G}}{T-1} \geq \gamma + Z \sqrt{\frac{\gamma (1-\gamma)}{T-1}}.
\end{align}
If text $x_{1:T}$ is written by humans, the proportion of green words contained in a text $\frac{|x_{1:T}|_G}{T-1}$ is $\gamma$ on average.
Thus, the second term in the constraint is the minimum margin for identifying whether the texts are written by LLMs.
We refer to the above problem as the \textbf{Necessary and Sufficient Watermark (NS-Watermark)}.
By comparing Eq.~\eqref{eq:hard_watermark2} with Eq.~\eqref{eq:ns_watermark}, 
the constraint of the NS-Watermark is looser than that in Eq.~\eqref{eq:hard_watermark2},
although the z-score of the generated text is guaranteed to be greater than or equal to $Z$ because of the constraint in Eq.~\eqref{eq:ns_watermark}.
Thus, the NS-Watermark can generate higher quality and more natural texts than the Hard/Soft-Watermark without decreasing detection accuracy.
In the next section, we propose an efficient algorithm for computing the NS-Watermark.

\subsection{Naive Algorithm for Necessary and Sufficient Watermark}
\label{sec:ns_watermark}

The Hard/Soft-Watermark can be computed using the conventional beam search
because the Hard-Watermark generates texts using only green words, and the Soft-Watermark just adds an offset to the probability that green words appear.
However, the NS-Watermark needs to control the proportion of green words contained in generated texts
and needs to optimize where green words should be inserted.
Moreover, the constraint in Eq.~\eqref{eq:ns_watermark} depends on the length of the generated text $T$, 
which is unknown until the text is generated.
This makes solving the NS-Watermark more challenging, which hinders the application of the conventional beam search to the NS-Watermark.
In this section, we propose an algorithm to solve the NS-Watermark. 

Let $k$ be the beam size.
Let $\mT[t][g]$ be a set of $k$ texts of length $t$ containing $g$ green words.
For simplicity, we explain the cases in which $1 \leq g$ and $1 \leq t$.
Texts of length $t+1$ containing $g$ green words can be generated by adding a green word to texts of length $t$ containing $g-1$ green words
or adding a red word to texts of length $t$ containing $g$ green words.
Formally, we generate text of length $t+1$ containing $g$ green words as follows:

\noindent
\begin{align*}
     &X_1 \! = \! \{ x_{1:t+1} \! \mid \! x_{1:t} \in \mT[t][g \! - \! 1], \; x_{t+1} \!\in\! V^{\text{green}} (x_{t}) \}, \\
     &X_2 \! = \! \{ x_{1:t+1} \! \mid \! x_{1:t} \in \mT[t][g], \; x_{t+1} \!\in\! V^{\text{red}} (x_{t}) \}, \\
    &\mT[t+1][g] \! = \! \argtopk_{x_{1:t+1} \in X_1 \cup X_2} p (x_{1:t+1} \mid x_{\text{prompt}}).
\end{align*}
By calculating $\mT[t][g]$ for all $g$ and $t$ and generating the text with the highest probability among the texts that satisfy the constraint in Eq.~\eqref{eq:ns_watermark},
we can solve Eq.~\eqref{eq:ns_watermark}.

Let $T_{\text{max}}$ a hyperparameter that controls the maximum length of generated texts.
We need to fill the table $\mT[t][g]$ for all $(t,g) \in \{ (t,g) \in \mathbb{Z}^2 \mid 1 \leq t \leq T_{\text{max}}, 0 \leq g \leq t-1  \}$,
which requires the time complexity $\mathcal{O}(k T_{\text{max}}^2)$ (see the left figure in Fig.~\ref{fig:alg1}).
However, if $G_{\text{max}} (\coloneqq \ceil{\gamma (T_{\text{max}}-1) + Z \sqrt{\gamma (1 - \gamma) (T_{\text{max}}-1)}})$ green words appear after generating $t$ words, 
it is not necessary to count the number of green words that appear in the remaining text
because the constraint in Eq.~\eqref{eq:ns_watermark} is satisfied regardless of the remaining text.
Based on this observation, we can reduce the time complexity by changing $\mT[t][G_{\text{max}}]$ to store texts of length $t$ containing \emph{at least} $G_{\text{max}}$ green words, instead of texts containing exactly $G_{\text{max}}$ green words.
Owing to this modification, we do not need to count the number of green words after $G_{\text{max}}$ green words appear in texts
and can reduce the time complexity to $\mathcal{O}(\gamma k T_{\text{max}}^2)$.
We provide a visual explanation in the figure on the right side of Fig.~\ref{fig:alg1} and show the pseudo-code in Alg.~\ref{alg:ns_watermark}.

\begin{algorithm*}[t]
\DontPrintSemicolon 
\KwIn{Maximum number of words $T_{\text{max}}$, vocabulary $V$, beam size $k$, and hyperparameter $\gamma$, $Z$.}
$G_{\text{max}} \leftarrow \ceil{\gamma (T_{\text{max}}-1) + Z \sqrt{\gamma (1 - \gamma) (T_{\text{max}}-1)}}$. \\
Let $\mT$ be a $T_{\text{max}} \times (G_{\text{max}}+1)$ table and $S$ be an empty set. \\
\SetKwProg{Fn}{Function}{ is}{end}
\Fn{update($X$: feasible set, $t$: the number of words $g$: the number of green words)}{
    $\mT[t][g] \leftarrow \emptyset$. \\
    \While{$|\mT[t][g] | < k$}{
        $x_{1:t} = \argmax_{x_{1:t} \in X \setminus \mT[t][g]} p(x_{1:t} \mid x_{\text{prompt}})$. \\
        \uIf{the last word $x_t$ is \textsc{eos}}{
            \uIf{$g \geq \gamma (t-1) + Z \sqrt{\gamma (1-\gamma) (t-1)}$}{
            $S \leftarrow S \cup \{x_{1:t}\}$.
        }
        }\Else{
            $\mT[t][g] \leftarrow \mT[t][g] \cup \{x_{1:t}\}$.
        }
    }
}
\Fn{feasible\_set($t$: the number of words, $g$: the number of green words)}{
    \uIf{$g = 0$}{
        $X \leftarrow \{ x_{1:t} \mid x_{1:t-1} \in \mT[t-1][g], x_t \in V^{\text{red}} (x_{t-1}) \}$.
    }\uElseIf{$g = t-1$}{
        $X \leftarrow \{ x_{1:t} \mid x_{1:t-1} \in \mT[t-1][g-1], x_t \in V^{\text{green}} (x_{t-1}) \}$.
    }\uElseIf{$ 1 \leq g < G_{\text{max}}$}{
        $X \leftarrow \{ x_{1:t} \mid x_{1:t-1} \in \mT[t-1][g-1], x_t \in V^{\text{green}} (x_{t-1}) \}$. \\
        $X \leftarrow X \cup \{ x_{1:t} \mid x_{1:t-1} \in \mT[t-1][g], x_t \in V^{\text{red}} (x_{t-1}) \}$.
    }\uElseIf{$g = G_{\text{max}}$}{
        $X \leftarrow \{ x_{1:t} \mid x_{1:t-1} \in \mT[t-1][g-1], x_t \in V^{\text{green}} (x_{t-1}) \}$. \\
        $X \leftarrow X \cup \{ x_{1:t} \mid x_{1:t-1} \in \mT[t-1][g], x_t \in V \}$.
    }
    \Return{$X$}
}
$\mT[1][0] \leftarrow \argtopk_{x_1 \in V} p(x_1 \mid x_{\text{prompt}})$. \\
\For{$t = 2, \cdots, T_{\text{max}}$}{
    \For{$g = 0, \cdots, \min \{ t-1, G_{\text{max}}\}$}{
        $X \leftarrow \textit{feasible\_set}(t, g)$. \\
        $\textit{update}(X, t, g)$.
    }
}
\Return{$\argmax_{x_{1:t} \in S \cup \mT[T_{\text{max}}][G_{\text{max}}]} p(x_{1:t} \mid x_{\text{prompt}})$.}
\caption{Naive algorithm for the NS-Watermark.}
\label{alg:ns_watermark}
\end{algorithm*}

\subsection{Linear Time Algorithm for Necessary and Sufficient Watermark}
\label{sec:linear}

Algorithm~\ref{alg:ns_watermark} in the previous section has the time complexity of $\mathcal{O}(\gamma k T_{\text{max}}^2)$, which is practically not suitable for LLMs with an extremely large number of parameters.
We resort to an approximation and reduce the time complexity to linear.


The major bottleneck of the quadratic time complexity in $T_{\text{max}}$ is that Alg.~\ref{alg:ns_watermark} requires us to fill the entire table $\mT[t][g]$ for all $(t, g) \in \{(t,g) \mid 1 \leq t \leq T_{\text{max}}, 0 \leq g \leq \min \{ t-1, G_{\text{max}} \} \}$.
This allows generated texts to contain many green words \emph{locally}
because the constraint in Eq.~\eqref{eq:ns_watermark} only restricts the green words to appear above a certain number in generated texts.
To reduce this computation, we additionally impose a constraint such that the green words appear \emph{periodically} in generated texts.
Technically, it is challenging because the proportion of green words appearing in the generated text depends on its length $T$ as in Eq.~\eqref{eq:ns_watermark},
which is unknown a priori.
For instance, a generated text is typically short for a closed-ended question, and the proportion of green words needs to be large. 
By contrast, generated texts tend to be long for news articles, and the proportion of green words can be reduced.
Thus, to make green words appear periodically in the generated text, we need to estimate the length of the generated text before generating it.

To estimate the text length, we leverage the observation that the length of generated texts remains almost the same regardless of watermarks
because the text length is generally determined by the content of the generated texts, i.e., the prompt.
Inspired by this observation, we propose generating the texts without watermarks using the conventional beam search,
obtaining the length of generated text $\widehat{T}$,
and generating text with watermarks by solving the following problem:
\begin{align}
\label{eq:ns_watermark2}
    \argmax_{x_{1:T}, T} &p(x_{1:T} \mid x_{\text{prompt}}) \\
    \;\; \text{s.t.} \;\; &\frac{|x_{1:T}|_\mathrm{G}}{T-1} \geq \gamma + Z \sqrt{\frac{\gamma (1-\gamma)}{T-1}}, \nonumber \\
    &\left| \frac{|x_{1:t}|_\mathrm{G}}{t-1} - \min \left\{1, \gamma + Z \sqrt{\frac{\gamma (1-\gamma)}{\widehat{T}-1}} \right\}  \right| \leq \frac{\alpha}{t-1}
    \;\; \text{or} \;\; |x_{1:t-1}|_\mathrm{G} \geq G_{\text{max}} \;\; (t = 2, \cdots, T),
    \nonumber
\end{align}
where $\alpha \geq 1$ denotes a hyperparameter that controls the approximation rate.
Intuitively, the second inequality makes green words appear periodically
and the last inequality verifies whether $G_{\text{max}}$ green words appear before the first $t-1$ words.
As explained in the previous section, if $G_{\text{max}}$ green words appear after generating $t-1$ words,
the number of green words added in the remaining texts needs not be counted anymore.
Thus, we only need to impose the last inequality on the generated text until $G_{\text{max}}$ green words appear.
We show the visual explanation in Fig.~\ref{fig:alg2}.
Owing to this additional constraint, we do not need to fill the table $\mT[t][g]$ for all $(t, g) \in \{(t,g) \in \mathbb{N}^2 \mid 1 \leq t \leq T_{\text{max}}, 0 \leq g \leq \min \{ t-1, G_{\text{max}} \} \}$.
We only need to fill the table $\mT[t][g]$ for $(t,g)$ that satisfies the conditions in Eq.~\ref{eq:ns_watermark2} (i.e., the colored area in Fig.~\ref{fig:alg2}).
Subsequently, the time complexity can be reduced to $\mathcal{O}(\alpha k T_{\text{max}})$.
We show the pseudo-code in Sec.~\ref{sec:pseudo_code}.

\subsection{Robustness to Post-editing Attack}
\label{sec:robustness1}
In the previous sections, we proposed the watermarking methods that impose the minimum constraint to detect LLM-generated texts. 
However, due to the minimality of the constraint, the NS-Watermark can be removed from the generated texts by replacing only one green word with a red word. 
To make the watermarks robust against such editing, we can tighten the constraint as follows:
\begin{align}
\label{eq:robust_ns_watermark}
    \argmax_{x_{1:T}, T} p(x_{1:T} \mid x_{\text{prompt}}) 
    \;\; \text{s.t.} \;\; \frac{|x_{1:T}|_\mathrm{G}}{T-1} \geq \gamma + \beta + Z \sqrt{\frac{\gamma (1-\gamma)}{T-1}}, 
\end{align}
where $\beta \geq 0$ is a hyperparameter that controls the robustness.
Owing to the constraint in Eq.~\eqref{eq:robust_ns_watermark}, 
the z-score of the generated texts exceeds $Z$ even if $\beta (T-1)$ green words are replaced with red words,
and we can identify them as the texts generated by LLMs.
Moreover, the constraint in Eq.~\eqref{eq:robust_ns_watermark} is also the minimum constraint required to be imposed on the generated texts such that the z-score exceeds $Z$ after $50\beta \%$ words are replaced.
In Sec.~\ref{sec:robustness}, we experimentally evaluate the trade-off between text quality and robustness against the post-editing attack,
demonstrating that the NS-Watermark can achieve the better trade-off than the Soft-Watermark.
\begin{figure*}[b!]
    \centering
    \begin{subfigure}{0.315\hsize}
        \includegraphics[width=\hsize]{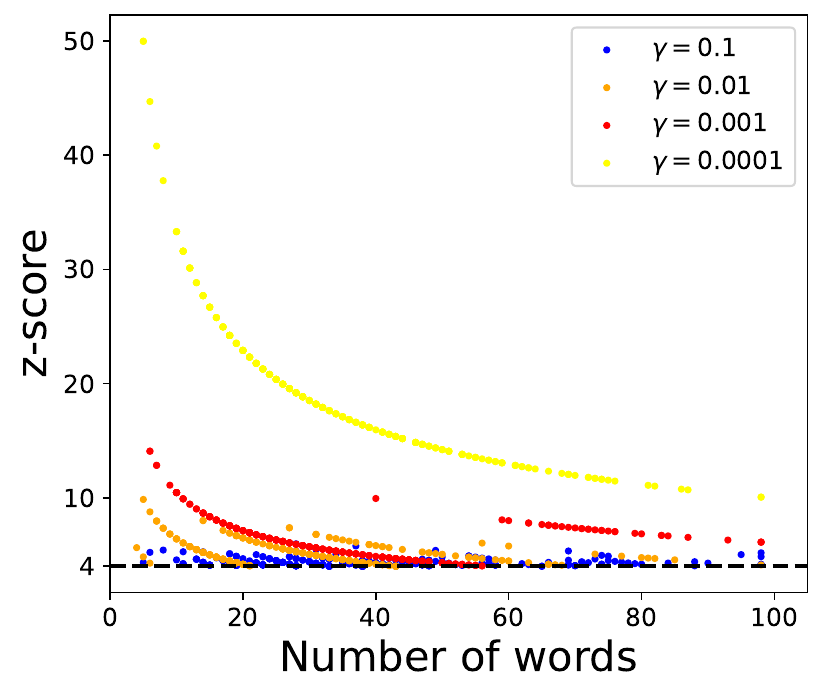}
        \caption{NS-Watermark}
    \end{subfigure}
    \begin{subfigure}{0.325\hsize}
        \includegraphics[width=\hsize]{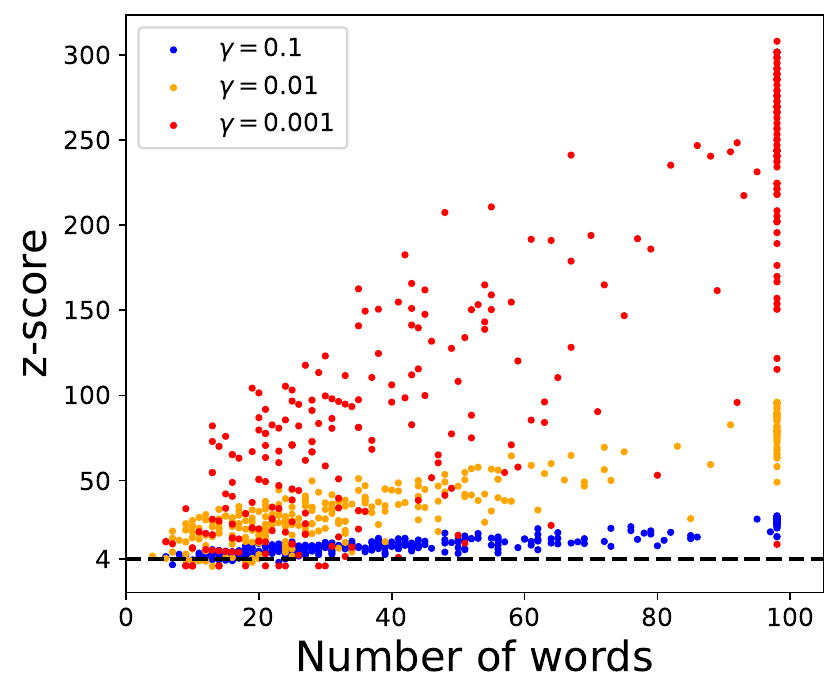}
        \caption{Soft-Watermark}
    \end{subfigure}
    \begin{subfigure}{0.325\hsize}
        \includegraphics[width=\hsize]{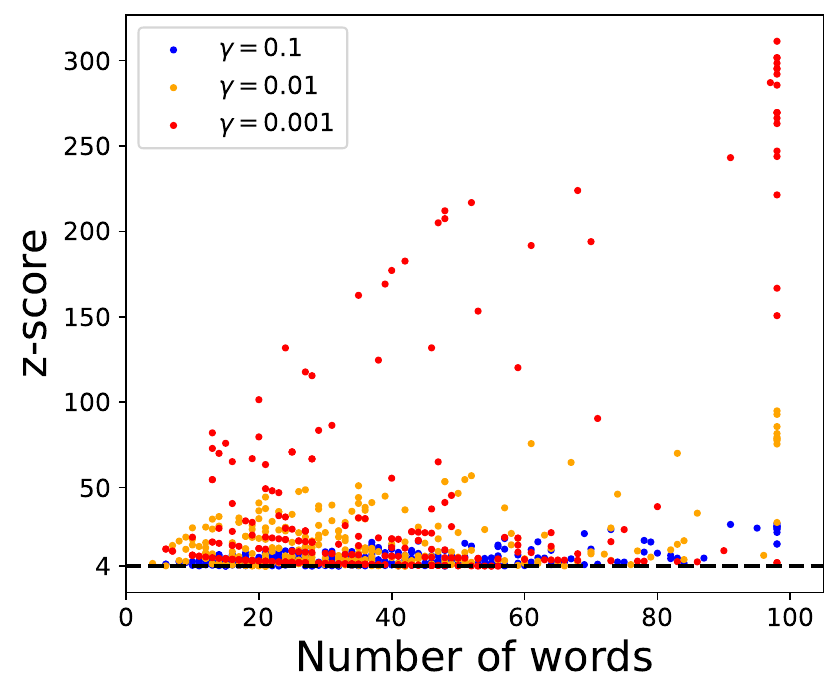}
        \caption{Adaptive Soft-Watermark}
    \end{subfigure}
    \vskip - 0.05 in
    \caption{Relationships between z-score and the length of generated texts. We used the validation datasets of WMT'16 En$\rightarrow$De. For each $\gamma$, we tuned the hyperparameter $\delta$ of the Soft-Watermark by increasing $4, 6, 8, \cdots$ and selecting the smallest value such that the FNR becomes less than $5\%$. We omit the results of the Soft-Watermark and Adaptive Soft-Watermark for $\gamma=0.0001$ because the z-scores become too large. Full results are deferred to Sec.~\ref{sec:visualization}.}
    \label{fig:z_score}
    \vskip - 0.05 in
\end{figure*}

\section{Experiments}
\label{sec:experiments}

\subsection{Comparison Methods}
In the following sections, we evaluate the following three methods:
(1) The Soft-Watermark \citep{kirchenbauer2023watermark},
which generates texts such that almost all words contained in the texts become green words
by increasing the probability that green words appear.
A hyperparameter $\delta \geq 0$ is a positive offset for the probability that green words appear.
When $\delta$ is set to a larger value, more green words appear in the generated texts.
(2) The NS-Watermark, which generates texts containing the minimum number of green words to detect LLM-generated texts, unlike the Soft-Watermark.
(3) The Adaptive Soft-Watermark, a simple extension of the Soft-Watermark.
The original Soft-Watermark uses the same hyperparameter $\delta$ for all texts,
and the proportion of green words contained in a generated text is almost constant regardless of text length.
Thus, the probability offset $\delta$ used by the Soft-Watermark increases the number of green words more than necessary to detect LLM-generated texts, especially for long texts.
We improve the Soft-Watermark such that $\delta$ is tuned for each text, which we refer to as the \emph{Adaptive Soft-Watermark}.
Specifically, the Adaptive Soft-Watermark finds $\delta$ by binary search and 
uses $\delta$ such that the z-score is minimum and exceeds the threshold $Z$.
We present the pseudo-code in Sec.~\ref{sec:pseudo_code}.
In this section, we did not evaluate Hard-Watermark since it has the very serious problem of not being able to generate common phrases or proper nouns, e.g., Barack Obama, when Obama is a red word.
Furthermore, we did not evaluate existing watermarking methods that focused on the ``undetectability'' of watermarks \citep{christ2023undetectable,kuditipudi2023robust}. Our paper focused on minimizing the text quality degradation caused by watermarking. These goals were orthogonal, and it is one of the interesting directions to design a novel watermarking method to achieve both properties.

\begin{table}[t]
\centering
\vskip -0.1 in
\caption{BLEU scores and detection accuracy with NLLB-200-3.3B and WMT. For the NS-Watermark, we set $\alpha$ to one and $\beta$ to zero. The best values among the watermarking methods are highlighted in bold.}
\label{table:translation}
\begin{tabular}{lcccc}
\toprule
 & \multicolumn{2}{c}{En $\rightarrow$ De} & \multicolumn{2}{c}{De $\rightarrow$ En}  \\
 & BLEU $\uparrow$ & FNR $\downarrow$ / FPR $\downarrow$ & BLEU $\uparrow$ & FNR $\downarrow$ / FPR $\downarrow$ \\
\midrule
w/o Watermark                            & $36.4$ & n.a. & $42.6$ & n.a. \\
Soft-Watermark                           &  $5.2$ & $3.0 \%$ / $0.4 \%$ &  $7.5$ & $3.3 \%$ / $0.5 \%$ \\
Adaptive Soft-Watermark \hspace{-15pt}   & $20.5$ & $\textbf{0.0 \%}$ / $2.6 \%$ & $20.6$ & $\textbf{0.0 \%}$ / $1.9 \%$ \\
NS-Watermark                             & $\textbf{32.7}$ & $\textbf{0.0 \%}$ / $\textbf{0.3 \%}$ & $\textbf{38.2}$ & $\textbf{0.0 \%}$ / $\textbf{0.0 \%}$ \\
\bottomrule
& & & & \\
\toprule
 & \multicolumn{2}{c}{En $\rightarrow$ Fr} & \multicolumn{2}{c}{Fr $\rightarrow$ En}  \\
 & BLEU $\uparrow$ & FNR $\downarrow$ / FPR $\downarrow$ & BLEU $\uparrow$ & FNR $\downarrow$ / FPR $\downarrow$ \\
\midrule
w/o Watermark                            & $42.6$ & n.a. & $40.8$ & n.a. \\
Soft-Watermark                           &  $9.6$ & $5.4 \%$ / $\textbf{0.3 \%}$  &  $7.6$ & $3.6 \%$ / $0.6 \%$ \\
Adaptive Soft-Watermark \hspace{-15pt}   & $23.3$ & $\textbf{0.0 \%}$ / $2.2 \%$  & $19.5$ & $\textbf{0.0 \%}$ / $2.8 \%$ \\
NS-Watermark                             & $\textbf{38.8}$ & $\textbf{0.0 \%}$ / $\textbf{0.3 \%}$  & $\textbf{36.8}$ & $\textbf{0.0 \%}$ / $\textbf{0.1 \%}$ \\
\bottomrule
\end{tabular}
\end{table}
\setlength{\textfloatsep}{10pt}

\subsection{Machine Translation}
\label{sec:machine_translation}
\textbf{Experimental Setting.}
We evaluate the effectiveness of the NS-Watermark on machine translation tasks.
We used NLLB-200-3.3B model \citep{nllbteam2022no} with the test dataset of WMT'14 French (Fr) $\leftrightarrow$ English (En) and WMT'16 German (De) $\leftrightarrow$ English (En).
Following the prior work \citep{kirchenbauer2023watermark}, we set the hyperparameter $Z$ to $4$.
For other hyperparameters, we split the data into the validation and test datasets with $10/90$ ratio and used the validation dataset to tune.
For the NS-Watermark, we selected $\gamma$ with the best BLEU score \citep{papineni2002bleu} on the validation dataset using a grid search.
The z-score of texts generated by the NS-Watermark is guaranteed to be greater than or equal to $Z$,
and the FNR of the NS-Watermark becomes exactly $0 \%$ for any hyperparameters.
By contrast, the z-score of text generated by the Soft-Watermark and Adaptive Soft-Watermark is not guaranteed to be greater than or equal to $Z$.
Thus, to fairly compare the NS-Watermark with the Soft-Watermark and Adaptive Soft-Watermark, we selected the hyperparameters of these methods with the best BLEU score while achieving more than $95 \%$ FNR in the validation dataset using a grid search.
See Sec.~\ref{sec:hyperparameter} for more detailed hyperparameter settings.

\textbf{Results.}\footnote{Note that \citet{kirchenbauer2023watermark} evaluated text quality and detection accuracy with only texts consisting of approximately $200$ words, while we evaluated them with all generated texts, which contain both short and long texts.}
Table~\ref{table:translation} shows that the NS-Watermark outperforms the Soft-Watermark and Adaptive Soft-Watermark in terms of both text quality and detection accuracy.
For all datasets, the Soft-Watermark significantly degraded the BLEU scores.
The Adaptive Soft-Watermark improved the BLEU scores by tuning $\delta$ for each text,
although it still achieved lower BLEU scores than the generated texts without watermarks.
By contrast, the NS-Watermark outperformed the Soft-Watermark by approximately $30$ BLEU scores
and achieved competitive BLEU scores with the conventional beam search without watermarks.
Moreover, the NS-Watermark can achieve $100.0 \%$ TPR because the NS-Watermark is guaranteed to insert a sufficient number of green words into the generated texts.

\textbf{Analysis of z-score and Length of Texts.}
Figure~\ref{fig:z_score} shows the z-scores and length of generated texts.
We also plot the relationship between the number of green words and the text length in Sec.~\ref{sec:visualization}.
In the Soft-Watermark, the z-score increased as generated texts became longer.
As discussed in Sec.~\ref{sec:linear},
the proportion of green words can be reduced as the length of generated texts increases.
However, the Soft-Watermark cannot change the proportion of green words adaptively to the text length,
resulting in generating texts with unnecessarily many green words for long texts.
The Adaptive Soft-Watermark mitigates this problem by tuning $\delta$ for each text,
although its z-score still increases as the length increases.
By contrast, the NS-Watermark can change the proportion of green words adaptively to the length,
and the z-score does not increase even if the text length increases.
Thus, the NS-Watermark imposes the minimum constraint to make the z-score of generated texts exceed the threshold $Z(=4)$
and can generate more natural and higher-quality texts than the Soft-Watermark and Adaptive Soft-Watermark.

\begin{figure}[t!]
\centering
\begin{minipage}[b]{0.36\columnwidth}
    \centering
    \includegraphics[height=5cm]{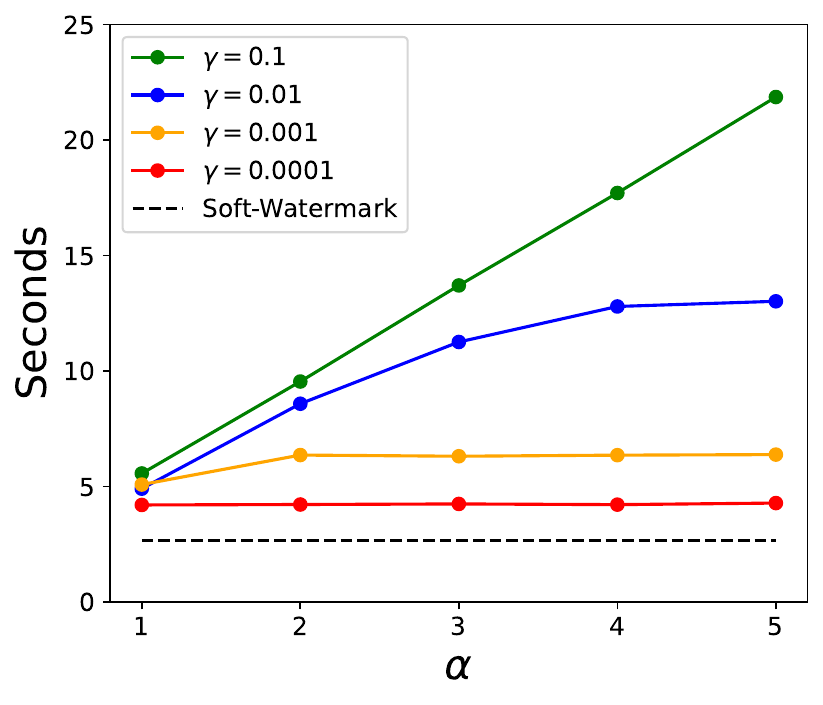}
    \caption{Time required to generate a text when varying $\alpha$.}
    \label{fig:speed}
\end{minipage}
\hfill
\begin{minipage}[b]{0.62\columnwidth}
    \centering
    \includegraphics[height=5cm]{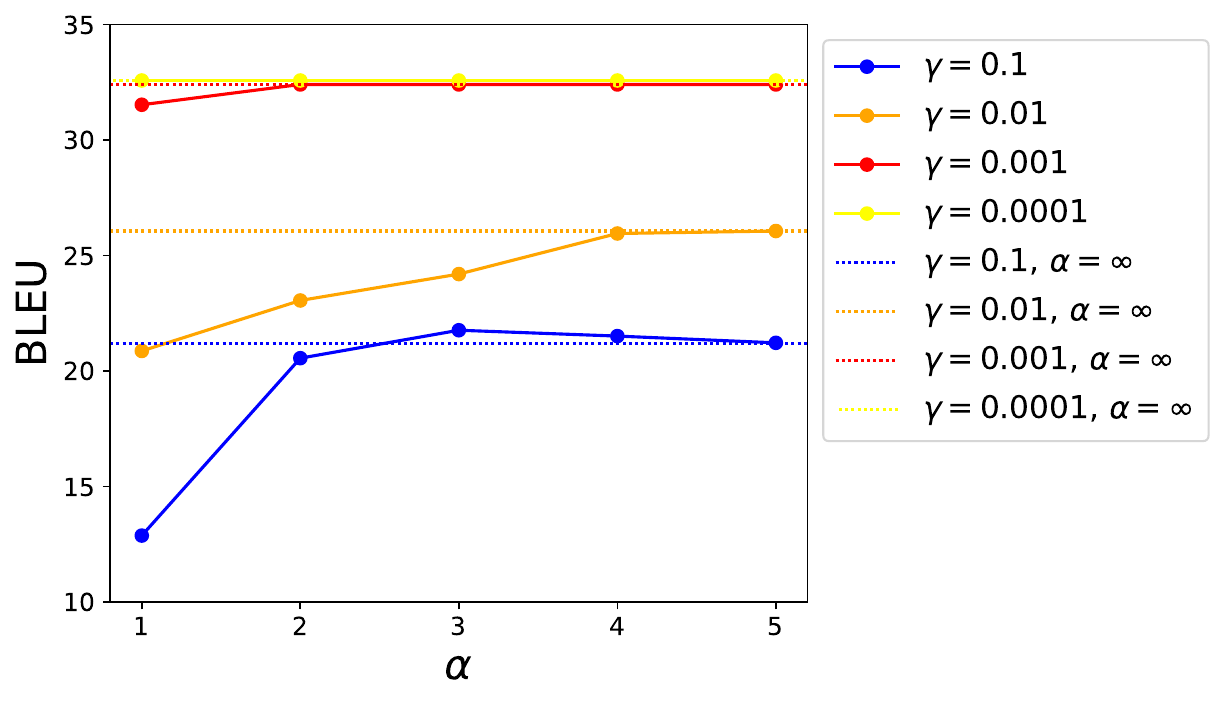}
    \caption{Text quality when varying $\alpha$. We used the validation dataset of WMT'16 En$\rightarrow$De.}
    \label{fig:alpha}
\end{minipage}
\end{figure}

\textbf{Analysis of Approximation Rate $\alpha$.}
In the above experiments, we demonstrated that the NS-Watermark outperforms the Soft-Watermark and Adaptive Soft-Watermark when $\alpha$ is set to one.
As we explained in Sec.~\ref{sec:linear}, the text quality can be improved using the larger $\alpha$.
In this section, we analyze the sensitivity of $\alpha$ on text quality.
Figure~\ref{fig:alpha} shows the results when varying $\alpha$.
The results indicate that when $\gamma$ is large, the BLEU scores increase with $\alpha$,
but when $\gamma$ is small, the BLEU scores are almost the same.
This is because more green words need to be contained in the generated texts when $\gamma$ is large.
Therefore, the larger $\gamma$, the greater the influence of the approximation,
and we need to use large $\alpha$ to generate high-quality texts.
Fortunately, because the NS-Watermark achieved the best BLEU score when $\gamma$ was small,
we can use the small $\alpha$ without degrading text quality much in practice.

\textbf{Running Time.}
Fig.~\ref{fig:speed} shows the running time when varying $\alpha$.
To measure the running time, we used the validation dataset of WMT'16 En$\rightarrow$De and reported the average running time.
For $\gamma=0.1, 0.01, 0.001, 0.0001$, $G_{\text{max}}$ is $22, 5, 2$, and $1$, respectively.
The results indicate that the running time increases as $\alpha$ increases when $\gamma$ is large.
This result was consistent with the time complexity of Alg.~\ref{alg:fast_ns_watermark}, which we discussed in Sec.~\ref{sec:linear}.
Then, when $\gamma$ was small, the running time was almost the same even if $\alpha$ increased.
This is because when $\gamma$ is small, $G_{\text{max}}$ is small, and the range in the table $\mT[t][g]$ where we need to fill in Alg.~\ref{alg:ns_watermark} is small.
Thus, the range in $\mT[t][g]$ where we need to calculate does not increase even if $\alpha$ increases when $\gamma$ is small.

\subsection{Natural Language Generation}
\textbf{Experimental Setting.}
Next, we compare the NS-Watermark and the Soft-Watermark in terms of perplexity (PPL).
We used LLaMA-7B model \citep{touvron2023llama} with the subsets of C4, realnews-like dataset \citep{raffel2020exploring}.
Based on the prior work \citep{kirchenbauer2023watermark},
we split each text and used the first $90 \%$ of words as the prompt to infer the remaining $10 \%$ of words using LLMs.
We regarded the last $10 \%$ of words contained in the data as the text written by humans
and compared the NS-Watermark with the Soft-Watermark and Adaptive Soft-Watermark in terms of PPL and detection accuracy.
Then, we set the hyperparameter $Z$ to $4$.
To tune hyperparameters, we split the dataset into validation and test datasets with $10/90$ ratio.
As in the previous section, we selected the hyperparameters of the Soft-Watermark and Adaptive Soft-Watermark with the best PPL while achieving more than $95 \%$ FNR in the validation dataset using a grid search.
See Sec.~\ref{sec:hyperparameter} for more detailed hyperparameter settings.

\textbf{Results.}
The results are listed in Table~\ref{table:ppl_vs_accuracy}.
The results were consistent with those presented in Sec.~\ref{sec:machine_translation}.
The NS-Watermark can outperform the Soft-Watermark and Adaptive Soft-Watermark in terms of text quality and detection accuracy.

\subsection{Robustness to Post-editing Attack}
\label{sec:robustness}

In this section, we analyze the trade-off between text quality degradation and robustness to post-editing, demonstrating that the NS-Watermark can achieve a better trade-off than the Soft-Watermark.

\textbf{Experimental Setting.}
To simulate the post-editing attack, we randomly select $\epsilon \%$ words in the generated texts and replace them with random words.
We then analyzed how much FNR increases after the editing.
We used LLaMA-7B model and the validation dataset of C4.
For the NS-Watermark, we showed the results when varying $\beta \in \{0,0.05,0.1,0.2\}$.
For other hyperparameters, we used the values shown in Sec.~\ref{sec:hyperparameter}.

\textbf{Results.}
Figure \ref{fig:attack} shows the relationship between text quality and robustness against post-editing of the NS-Watermark and Soft-Watermark with various hyperparameters.
By comparing the results of the NS-Watermark and Soft-Watermark with the same level of PPL, 
we observe that the NS-Watermark achieved a smaller FNR than the Soft-Watermark.
Thus, the NS-Watermark can achieve a better trade-off between text quality and robustness than the Soft-Watermark.
In Sec.~\ref{sec:robustness_apendix}, we also compared the NS-Watermark with the Adaptive Soft-Watermark and demonstrated that the NS-Watermark can achieve a better trade-off than the Adaptive Soft-Watermark.
\begin{table}[t]
\vskip - 0.05 in
\centering
\caption{Text quality and detection accuracy with LLaMA-7B and C4 dataset. For the NS-Watermark, we set $(\alpha, \beta)=(1,0)$. The best values among the watermarking methods are shown in bold.}
\label{table:ppl_vs_accuracy}
\small
\begin{tabular}{lcc}
\toprule
& PPL $\downarrow$ & FNR $\downarrow$ / FPR $\downarrow$ \\
\midrule
w/o Watermark              & $1.85$ & n.a. \\
Soft-Watermark             & $6.25$ & $2.8 \%$ / $\textbf{0.1 \%}$ \\
Adaptive Soft-Watermark    & $2.48$ & $0.2 \%$ / $0.8 \%$ \\
NS-Watermark               & $\textbf{1.92}$ & $\textbf{0.0 \%}$ / $0.3 \%$ \\
\bottomrule
\end{tabular} 
\end{table}
\begin{figure*}[t]
    \centering
    \vskip - 0.2 in
    \begin{subfigure}{0.45\hsize}    
        \includegraphics[width=\hsize]{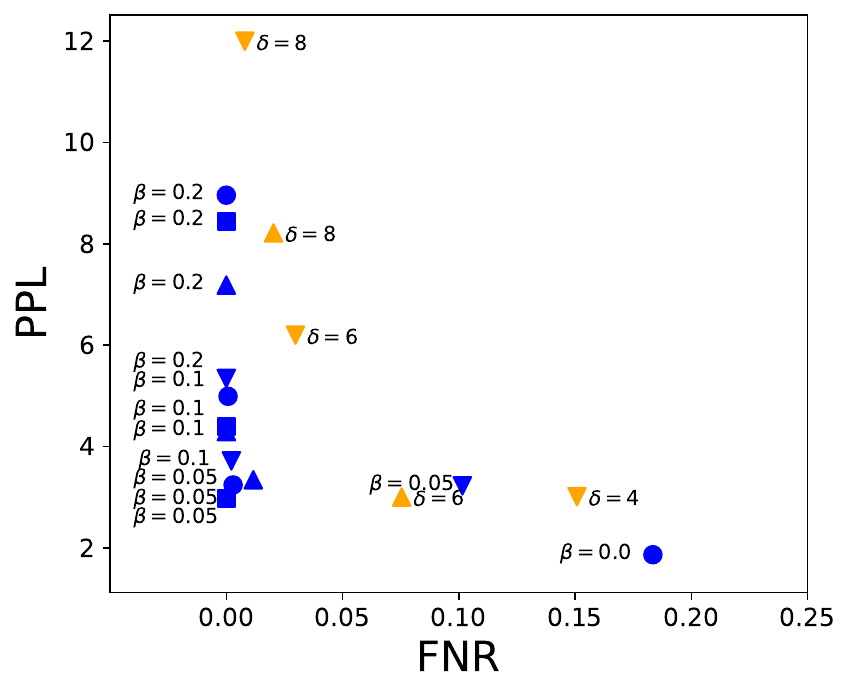}
        \vskip - 0.05 in
        \caption{$\epsilon=0.1$}
    \end{subfigure}
    \begin{subfigure}{0.45\hsize}    
        \includegraphics[width=\hsize]{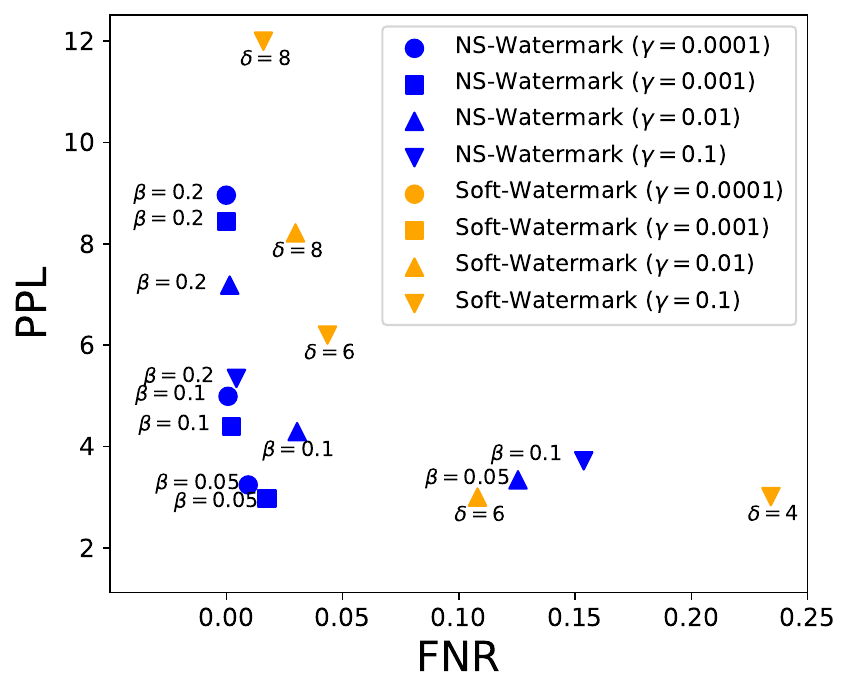}
        \vskip - 0.05 in
        \caption{$\epsilon=0.2$}
    \end{subfigure}
    \caption{Trade-off between text quality and robustness against post-editing. To make the figure more readable, the results with FNR greater than $25\%$ were omitted. The method with the point in the lower left corner is the superior method. Surprisingly, the NS-Watermark is generally more robust against the post-editing than the Soft-Watermark even with a small offset $\beta=0.05$.}
    \label{fig:attack}
\end{figure*}

\section{Related Work}
\label{sec:related_work}
\textbf{Watermarking Methods.}
Watermarking methods detect LLM-generated texts by inserting imperceptible information into generated texts.
Watermarking methods have been extensively studied for images and audio \citep{luo2020distortion,liu2023dear}.
However, due to the discrete structure of language, watermarking methods for natural language have been more challenging than those for images and audio.
Recently, \citet{kirchenbauer2023watermark} proposed the first practical watermarking method for LLMs.
\citet{kuditipudi2023robust} have extended it and proposed methods that are robust against post-editing,
and \citet{christ2023undetectable} proposed undetectable methods.
These methods skew the distributions of generated texts (e.g., the ratio of green and red words) and detect LLM-generated texts using statistical hypothesis testing.
One advantage of watermarking methods is their high detection accuracy.
Furthermore, thanks to statistical hypothesis testing, the FPR can be explicitly adjusted by the hyperparameter.
However, because watermarking methods need to modify generated texts, generated texts are often of low quality.
Our experiments indicated that the existing methods underestimated text quality degradation caused by watermarking, 
and the NS-Watermark differs from them in that it aims to minimize text-quality degradation.


\textbf{Post-hoc Detection Methods.}
As an alternative approach, post-hoc detection methods have been proposed \citep{zellers2019defending,gehrmann2019gltr,tian2023gptzero,mitchell2023detectgpt}.
\citet{zellers2019defending} and \citet{tian2023gptzero} proposed training additional models to detect LLM-generated texts.
\citet{mitchell2023detectgpt} also found that LLM-generated texts tend to be texts at which the curvature of the LLMs' log-likelihood becomes negative 
and demonstrated that those can be identified without training additional models.
These post-hoc methods do not degrade text quality because they do not modify generated texts.
However, post-hoc methods are inferior to watermarking methods in detection accuracy \citep{krishna2023paraphrasing}.

\section{Conclusion}
In this study, we proposed the NS-Watermark 
for inserting watermarks into generated texts with minimum text quality degradation.
Specifically, we proposed the NS-Watermark that imposes the minimum constraint required to detect LLM-generated texts on generated texts.
We conducted the experiments on various tasks,
showing that the NS-Watermark can achieve $0 \%$ false negative rate with negligible text quality degradation.
Furthermore, the experimental results demonstrated that the NS-Watermark improves the quality-detectability trade-off for post-editing attacks.

\section{Limitations}
The NS-Watermark can minimize text quality degradation, while it increases the time complexity.
We proposed the approximation methods to alleviate this issue, but its time complexity is still higher than that of the Soft-Watermark.
Reducing the time complexity of the NS-Watermark is one of the promising research directions.
Besides the methods that minimize text quality degradation, several papers studied undetectable watermarking methods \citep{hu2024unbiased,christ2023undetectable}.
It is also a very promising research direction to study undetectable watermarking methods with minimum text quality degradation by extending our work.

\bibliography{custom}
\bibliographystyle{tmlr}

\newpage
\appendix
\section{Necessary and Sufficient Watermark is Provably Better than Soft-Watermark}
In Sec.~\ref{sec:experiments}, we demonstrated that the Soft-Watermark imposes overly restrictive constraints on generated texts
and inserts too many green words to detect LLM-generated texts.
The Adaptive Soft-Watermark tunes $\delta$ for each text, but Sec.~\ref{sec:experiments} showed that the Adaptive Soft-Watermark remains to insert too many green words into texts.
We rigorously analyze this issue, providing the following theorem,
which shows that no matter how well the hyperparameter $\delta$ is tuned for each text, the Soft-Watermark cannot precisely control the number of green words in generated texts and generates text that contains more than the required number of green words.
\begin{theorem}[Informal]
\label{th:adaptive_soft_watermark}
If we select minimum $\delta^\star \in \mathbb{R}$ such that the z-score of the text generated by the Soft-Watermark exceeds the threshold $Z$,
the Soft-Watermark generates a text that contains more than the required number of green words with non-zero probability.
\end{theorem}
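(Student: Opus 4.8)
The plan is to follow how the Soft-Watermark's output changes as the offset $\delta$ grows and to show that at the critical value $\delta^\star$ the green-word count jumps \emph{past} the required threshold, not onto it. Fix the prompt, the language model, and the seeded green/red partition; then the Soft-Watermark's text is a deterministic function $x(\delta)$ of $\delta$ alone. Write $N(\delta) := |x(\delta)|_{\mathrm G}$ and $T(\delta)$ for its length, and set $G_{\mathrm{req}}(T) := \lceil \gamma(T-1) + Z\sqrt{\gamma(1-\gamma)(T-1)}\rceil$, so that the test $z(x(\delta)) \ge Z$ is exactly $N(\delta) \ge G_{\mathrm{req}}(T(\delta))$. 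Each beam-search comparison between two hypotheses flips only when $\delta$ crosses a value that equals a difference of log-probabilities divided by an integer, so $\delta \mapsto (x(\delta),N(\delta),T(\delta))$ is piecewise constant with finitely many breakpoints; assuming the model's finitely many relevant log-probability gaps avoid the finitely many "tie" values (true for generic models and for the explicit model built below), $x(\cdot)$ is right-continuous. Since $N(\delta)\to T(\delta)-1$ and hence $z\to\Omega(\sqrt{T})>Z$ as $\delta\to\infty$, the set of valid $\delta$ is nonempty and $\delta^\star$ is attained; and by minimality of $\delta^\star$ there is a genuine jump of $x(\cdot)$ there, with $N(\delta)<G_{\mathrm{req}}(T(\delta))$ for $\delta$ just below $\delta^\star$ but $N(\delta^\star)\ge G_{\mathrm{req}}(T(\delta^\star))$.

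It remains to show that, with positive probability over the random partition, this jump \emph{overshoots}: $N(\delta^\star)\ge G_{\mathrm{req}}(T(\delta^\star))+1$. The mechanism is a cascade. When the decoding flips at $\delta^\star$, some token $x_{t^\star}$ of the realized text changes from a red token $r$ to a green token $g$; but the partition used at every later step is seeded by the token just emitted, so replacing $r$ by $g$ \emph{re-seeds the whole suffix} $x_{t^\star+1:T}$ against a fresh, conditionally independent partition. Thus the suffix's green contribution is effectively resampled, and the total $N$ changes by an amount that is not forced to be $1$. To upgrade "not forced" to "strictly more than needed, with positive probability," I would fix an explicit toy instance: a vocabulary of size $n$ with ratio $\gamma$, a prompt under which the model emits a text of a fixed length $L$ with $\gamma(L-1)<G_{\mathrm{req}}(L)<L-1$, and at each step exactly two admissible continuations whose probabilities are tuned so that (i) the breakpoints are in general position and (ii) conditioned on the identity of $x_{t^\star}$, the suffix decoding reduces to a Markov chain whose green count is a sum of nearly i.i.d.\ $\mathrm{Bernoulli}(\gamma)$-type indicators. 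A standard binomial anti-concentration (lower-tail-complement) estimate then bounds below, by a positive constant, the probability that the resampled suffix count exceeds the pre-flip suffix count by at least $2$ while the prefix count equals $G_{\mathrm{req}}(L)-1$ minus the pre-flip suffix count; on this event $N(\delta^\star)\ge G_{\mathrm{req}}(L)+1$, which is the claim.

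I expect the main obstacle to be exactly this coupling between the decoding dynamics and the partition randomness: one flip at step $t^\star$ propagates through all of $x_{t^\star+1:T}$, so the per-step green indicators are not independent and must be handled via the Markov-chain structure, and one must also rule out that $N(\cdot)$ lands on $G_{\mathrm{req}}$ at $\delta^\star$ for reasons having nothing to do with the hash. The explicit construction is what makes this manageable — conditioning on $x_{t^\star}$ turns the suffix into a clean chain, decoupling the randomness enough for the anti-concentration bound, while pinning the length $T(\delta)\equiv L$ so that $G_{\mathrm{req}}$ does not move as $\delta$ varies. The residual bookkeeping (ties in the argmax, and the dependence of $G_{\mathrm{req}}$ on $T$ when a flip also changes the length) is dispatched by restricting to the positive-probability event that the relevant log-probability gaps avoid the finitely many tie values and, in the toy model, by fixing $L$ outright.
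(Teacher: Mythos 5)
Your mechanism is the right one — and in fact is essentially the mechanism implicit in the paper's proof — but your proposed route to make it rigorous is genuinely different, and it is worth spelling out what the paper does instead. The paper adds a stronger set of assumptions: it assumes $\gamma$ is so small that the greedy (un-watermarked) decode contains \emph{zero} green words and that a \emph{single} green word already pushes the z-score above $Z$ (their Assumption~4), and it posits that for \emph{any} realized text the per-step logit gaps $L_t = \text{logit}(x_t) - \max_{x \in V^{\mathrm{green}}(x_{t-1})}\text{logit}(x)$ are i.i.d.\ draws from a fixed density $p$ with strictly increasing CDF (their Assumption~5). Under Assumption~4 the threshold $\delta^\star$ is precisely $\min_t L_t$ of the greedy text (their Lemma~1), the flip happens at the $\argmin$ position $t^\star$, and the re-decoded suffix of length $T-t^\star$ has each position green independently with probability $1 - c(\delta^\star)$ where $c(a) = \int_a^\infty p$. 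The event you are trying to exhibit — ``the suffix, once re-seeded, picks up at least one extra green word'' — then has probability $1 - c(\delta^\star)^{T-t^\star}$, and an extreme-value computation (position of the minimum is uniform; $(T-1)(1-c(L_{\min}))$ converges to an $\mathrm{Exp}(1)$ variable; a Frullani integral) pins the overshoot probability to exactly $1 - \log 2$ as $T\to\infty$. So the paper buys an exact constant by legislating independence and reducing ``required'' to the single-green-word regime, whereas you try to earn positivity from scratch by constructing a toy instance and invoking anti-concentration. Your route is plausible, but note two places where the paper's assumptions are doing load-bearing work that your sketch currently defers: (i) you let the ``required count'' $G_{\mathrm{req}}(T)$ vary with $T$, which forces you to pin $T(\delta)\equiv L$ by construction — the paper sidesteps this entirely via Assumption~4; and (ii) your appeal to ``near-i.i.d.\ Bernoulli'' suffix indicators is exactly the place the paper assumes outright, so if you want a proof that does not silently re-introduce that independence, the Markov-chain coupling you gesture at must actually be carried out. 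As written, your proposal is a correct plan with the right core idea but establishes only ``positive probability,'' is incomplete at the construction-and-anti-concentration stage, and also leaves unaddressed whether ``valid $\delta$'' is an up-set (so that $\delta^\star$ really sits at a single jump) — the paper gets this for free from Lemma~1, but you would need an argument or a toy model in which monotonicity holds by design.
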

The formal theorem and its proof are presented in Sec.~\ref{sec:proof}.
Unlike the Soft-Watermark and Adaptive Soft-Watermark,
the NS-Watermark can insert the minimum number of green words into generated texts using dynamic programming
and thus can generate higher-quality texts than these methods.

\section{Proof of Theorem \ref{th:adaptive_soft_watermark}}
\label{sec:proof}

\begin{assumption}
\label{assumption:beam_size}
    The beam size is set to one.
\end{assumption}
\begin{assumption}
\label{assumption:length}
    The length of generated texts $T$ is sufficiently long.
\end{assumption}
\begin{assumption}
\label{assumption:delta}
    $\Delta=\mathbb{R}$.
\end{assumption}
\begin{assumption}
\label{assumption:gamma}
    The hyperparameter $\gamma$ is sufficiently small such that the text generated by the greedy search does not contain green words, and texts containing a single green word have a z-score greater than the threshold $Z$ for any length $T$.
\end{assumption}
\begin{assumption}
\label{assumption:logit}
    For any prompt $x_{\text{prompt}}$ and generated text $x_{1:T}$, $L_t(x_{1:T})$, defined below, is an independent and identically distributed random variable that follows the distribution $L_t (x_{1:T}) \sim p(\cdot)$:
    \begin{align*}
        L_t(x_{1:T}) &\coloneqq \text{logit} (x_t \mid x_{1:t-1}, x_{\text{prompt}}) \\
        &- \max_{x \in V^{\text{green}} (x_{t-1})} \text{logit} (x \mid x_{1:t-1}, x_{\text{prompt}}),
    \end{align*}
    where $\text{logit} (\cdot \mid \cdot)$ is the output just before the last softmax layer in LLMs.
    Furthermore, we assume that its cumulative distribution function is strictly increasing.
\end{assumption}

\begin{lemma}
\label{lemma:delta}
Let $r_{1:T}$ be the text generated by greedy search (i.e., text without watermarks).
We then select the minimum $\delta^\star \in \Delta$ such that the z-score of the text generated by the Soft-Watermark exceeds the threshold $Z$.
Under Assumptions \ref{assumption:beam_size}, \ref{assumption:length}, \ref{assumption:gamma}, and \ref{assumption:logit},
the selected hyperparameter $\delta^\star$ satisfies the following:
\begin{align*}
    \delta^\star \coloneqq \min_{\delta} \{ \delta \in \Delta \mid \delta \geq \min_t L_t(r_{1:T})\}.
\end{align*}
Furthermore, under Assumption \ref{assumption:delta}, $\delta^\star$ satisfies
\begin{align*}
    \delta^\star = \min_t L_t(r_{1:T}).
\end{align*}
\begin{proof}
If $\delta < \min_t L_t(r_{1:T})$, the text generated by the Soft-Watermark is $r_{1:T}$ and does not contain green words.
If $\delta \geq \min_t L_t(r_{1:T})$, $(\min \{ t \mid L_t(r_{1:T}) \le \delta \})$-th word becomes a green word,
and the text contains at least one green word.
Thus, we can obtain the statement.
\end{proof}
\end{lemma}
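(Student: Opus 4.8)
The plan is to pin down exactly, as a function of the offset $\delta$, which text the Soft-Watermark produces when the beam size is one (Assumption~\ref{assumption:beam_size}) — that is, greedy decoding in which the logits of green words are raised by $\delta$ — and in particular to locate the smallest $\delta$ at which the output first contains a green word. Once that threshold is identified the two displayed equalities follow immediately: by Assumption~\ref{assumption:gamma} the greedy text $r_{1:T}$ has no green words, so its z-score is negative and does not exceed the positive threshold $Z$, whereas a text with at least one green word has, again by Assumption~\ref{assumption:gamma} together with the fact that $z(x_{1:T})$ is increasing in the green-word count, z-score greater than $Z$.

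\textbf{Key steps.} First I would prove by induction on the position $t$ that whenever $\delta < \min_t L_t(r_{1:T})$ the offset-augmented greedy decoder reproduces $r_{1:T}$ verbatim: if the first $t-1$ outputs equal $r_{1:t-1}$, then the conditioning context at step $t$ is the one plain greedy decoding sees, $r_t$ is not green (Assumption~\ref{assumption:gamma}), and $\delta < L_t(r_{1:T})$ says precisely that the raised logit $\max_{x\in V^{\text{green}}(r_{t-1})}\text{logit}(x\mid\cdot)+\delta$ still lies below $\text{logit}(r_t\mid\cdot)$, so $r_t$ is selected again; hence no such $\delta$ is feasible. Second, for any $\delta \ge \min_t L_t(r_{1:T})$, set $t^\star = \min\{t : L_t(r_{1:T}) \le \delta\}$; the same induction gives $r_{1:t^\star-1}$ for the first $t^\star-1$ words, and at step $t^\star$ the best green word has raised logit $\ge \text{logit}(r_{t^\star}\mid\cdot)$ (this is exactly $\delta \ge L_{t^\star}(r_{1:T})$) while $r_{t^\star}$ still maximizes the un-raised logits, so a green word is chosen and the output contains a green word, hence z-score $>Z$. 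Combining the two halves, the feasible offsets inside $\Delta$ are exactly $\{\delta \in \Delta : \delta \ge \min_t L_t(r_{1:T})\}$, so $\delta^\star$ equals the minimum of that set; when $\Delta = \mathbb{R}$ (Assumption~\ref{assumption:delta}) the minimum is attained and $\delta^\star = \min_t L_t(r_{1:T})$.

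The step I expect to be the main obstacle is the induction that controls the \emph{entire} decoded trajectory: changing the word chosen at one position perturbs the conditioning context of every later logit, so a priori the output at offset $\delta$ could drift away from $r_{1:T}$ in an uncontrolled way. The resolution is that we never need to track the trajectory past the first green word — before any green word appears, the offset is active only through the single comparison $\text{logit}(r_t\mid\cdot)$ versus $\max_{\text{green}}\text{logit}(\cdot\mid\cdot)+\delta$, which depends only on $L_t(r_{1:T})$ evaluated along the greedy path, and that path is exactly what the induction maintains up to step $t^\star$. A minor additional point is the boundary case $\delta = \min_t L_t(r_{1:T})$, where the comparison at $t^\star$ is an exact tie; Assumption~\ref{assumption:logit} (strictly increasing cumulative distribution function of $L_t$) makes such ties a probability-zero event, and the $\le$ in the definition of $t^\star$ fixes the convention that the tie is resolved toward the green word, so the endpoint is feasible and the minimum in the second equality is genuinely attained.
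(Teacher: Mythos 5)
Your proposal is correct and follows essentially the same route as the paper's proof, which also splits on whether $\delta$ is below or at-least $\min_t L_t(r_{1:T})$ and observes that in the first case the Soft-Watermark greedily reproduces $r_{1:T}$ (hence no green words, z-score below $Z$), while in the second case a green word is forced at position $t^\star = \min\{t : L_t(r_{1:T})\le\delta\}$ (hence, by Assumption~\ref{assumption:gamma}, z-score above $Z$). The paper leaves the induction that the decoded prefix agrees with $r_{1:t^\star-1}$ implicit and does not discuss tie-breaking at equality; you make both of these explicit, which is a genuine (if modest) tightening of the exposition rather than a different argument.
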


\setcounter{theorem}{0}
\begin{theorem}[Formal]
\label{th:adaptive_soft_watermark_formal}
We select the minimum $\delta^\star \in \Delta$ such that the z-score of the text generated by the Soft-Watermark exceeds the threshold $Z$.
Under Assumptions \ref{assumption:beam_size}, \ref{assumption:length}, \ref{assumption:delta}, \ref{assumption:gamma}, and \ref{assumption:logit},
the text generated by the Soft-Watermark with $\delta^\star$ contains two or more green words with probability $1-\log 2$.
\end{theorem}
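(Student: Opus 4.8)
The plan is to reduce the claim to a small computation about the minimum of $T$ i.i.d.\ logit gaps. By Lemma~\ref{lemma:delta} the selected offset equals $\delta^\star = m := \min_{1\le t\le T} L_t(r_{1:T})$, where $r_{1:T}$ is the greedy, watermark-free text; write $t^\star := \argmin_t L_t(r_{1:T})$ (a.s.\ unique by Assumption~\ref{assumption:logit}). First I would describe the text generated at $\delta=\delta^\star$: positions $1,\dots,t^\star-1$ coincide with $r_{1:T}$ and are red (there $L_t(r_{1:T})>m$), position $t^\star$ turns green, and the suffix $t^\star+1,\dots,T$ is re-generated with the green offset $\delta^\star$ now in force.

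The central step is to rewrite the event $\{\#\text{green}=1\}$ in terms of fresh gaps. By Assumption~\ref{assumption:logit} applied to the re-generated suffix, its gaps $L'_{t^\star+1},\dots,L'_T$ are i.i.d.\ with the same c.d.f.\ $F$ and independent of $(m,t^\star)$, and suffix position $j$ is green iff $L'_j\le\delta^\star=m$. Following the cascade — each time a suffix position turns green, the text after it is re-drawn, but a second green word has by then already appeared — one sees that the output has exactly one green word if and only if \emph{every} $L'_j$ with $t^\star+1\le j\le T$ exceeds $m$. Hence $\Pr[\#\text{green}=1\mid m,t^\star]=(1-F(m))^{T-t^\star}$; and since Assumption~\ref{assumption:gamma} forces $\#\text{green}\ge 1$ always, it suffices to show $\E\bigl[(1-F(m))^{T-t^\star}\bigr]\to\log 2$.

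Then I would integrate out $(m,t^\star)$. For $T$ i.i.d.\ draws from a continuous strictly increasing c.d.f.\ $F$, the location and the value of the minimum are independent, $t^\star$ is uniform on $\{1,\dots,T\}$, and $Y:=F(m)$ has density $T(1-y)^{T-1}$ on $[0,1]$. Therefore
\begin{align*}
\Pr[\#\text{green}=1]
&= \frac1T\sum_{s=1}^{T}\int_0^1 (1-y)^{T-s}\,T(1-y)^{T-1}\,dy \\
&= \frac1T\sum_{s=1}^{T}\frac{T}{2T-s}
 = \sum_{k=T}^{2T-1}\frac{1}{k},
\end{align*}
using the substitution $k=2T-s$ together with $\int_0^1 u^{2T-1-s}\,du=\tfrac{1}{2T-s}$. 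This partial sum of the harmonic series tends to $\int_1^2 dx/x=\log 2$ as $T\to\infty$, which is the regime of Assumption~\ref{assumption:length}; hence $\Pr[\#\text{green}\ge 2]=1-\sum_{k=T}^{2T-1}\frac1k\to 1-\log 2$, as claimed. (I read Assumption~\ref{assumption:length} as ``generation emits exactly $T$ tokens'', so the re-generated suffix has length $T-t^\star$; this is precisely what fixes the constant at $\log 2$ rather than some other value.)

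The step I expect to be the main obstacle is the second paragraph: making rigorous that the re-generated suffix contributes genuinely fresh i.i.d.\ gaps that are independent of the minimum $m$ computed from the greedy run, and that the recursive re-generation does not spoil the equivalence $\{\#\text{green}=1\}\iff\{\text{all suffix gaps}>m\}$. The other ingredients — independence of the location and the value of the minimum, the $\mathrm{Beta}(1,T)$ law of $F(m)$, the elementary integral, and the harmonic-sum limit — are routine.
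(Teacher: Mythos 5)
Your proof is correct and follows the same overall decomposition as the paper: use Lemma~\ref{lemma:delta} to identify $\delta^\star$ with the minimum logit gap of the greedy run, observe that the output has exactly one green word iff every fresh suffix gap exceeds that minimum, and then average over the (independent) location $t^\star$ and value $L_{\min}$ of the minimum. Where you genuinely diverge is in how the limit is taken. The paper first sums the geometric series over $t^\star$ to get $\frac{1-c(L_{\min})^{T-1}}{(T-1)(1-c(L_{\min}))}$, then shows $(T-1)(1-c(L_{\min}))\Rightarrow \mathrm{Exp}(1)$, and leaves implicit both the interchange of the $T\to\infty$ limit with the expectation and the evaluation $\E\bigl[\tfrac{1-e^{-Y}}{Y}\bigr]=\log 2$ (a Frullani-type integral). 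You instead integrate out $L_{\min}$ first using the exact $\mathrm{Beta}(1,T)$ law of $F(L_{\min})$, which collapses the whole expression to the closed-form harmonic tail $\sum_{k=T}^{2T-1} k^{-1}$, making the convergence to $\log 2$ an elementary Riemann-sum fact with no interchange of limits to justify. That is cleaner and arguably more rigorous than the paper's route. Two small remarks: you index $t^\star$ over $\{1,\dots,T\}$ whereas the paper (correctly, given that $L_t$ involves $x_{t-1}$) restricts to $\{2,\dots,T\}$, so your harmonic sum is off by a unit shift; the limit is unaffected. Also, the ``fresh i.i.d.\ suffix gaps'' concern you flag is real, and the paper handles it no more carefully than you do — both rely on Assumption~\ref{assumption:logit} being read as applying to whatever text is actually generated, not just the greedy run.
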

\begin{proof}
Let $r_{1:T}$ be the text generated by the greedy search.
We define $c(\cdot)$ and $L_{\text{min}}$ as follows:
\begin{align*}
    c (a) &\coloneqq \int_a^{\infty} p(l) d_l, \\
    L_{\text{min}} &\coloneqq \min_t L_t(r_{1:T}).
\end{align*}
From Lemma \ref{lemma:delta}, we have $\delta^\star = \min_t L_t(r_{1:T})$ 
and $t^\star (\coloneqq \min \{ t \mid L_t(r_{1:T}) \le \delta \})$-th word becomes a green word in the text generated by the Soft-Watermark.
Then, the probability that the remaining text has no green words is $c (L_{\text{min}})^{T-t^\star}$.
Moreover, giving that the $t^\star$ has equal probabilities for $2, 3, \cdots, T$, 
the probability that the texts generated by the Soft-Watermark with $\delta^\star$ contain only one green word is given as
\begin{align}
\label{eq:prob_one_green}
    \frac{1}{T-1} \sum_{t=2}^{T} c (L_{\text{min}})^{T-t} = \frac{1 - c (L_{\text{min}})^{T-1}}{(T-1) (1 - c (L_{\text{min}}))}.
\end{align}
Next, we consider the distribution of $L_{\text{min}}$.
Now, we have
\begin{align*}
   \mathrm{Pr} (L_{\text{min}} \geq a) = c(a)^{T-1}.
\end{align*}
Thus, by substituting $a = c^{-1} (1 - \frac{s}{T-1})$, we can get
\begin{align*}
    \mathrm{Pr} (L_{\text{min}} \geq c^{-1} (1 - \frac{s}{T-1})) &= (1 - \frac{s}{T-1})^{T-1}, \\
    \mathrm{Pr} ( \frac{s}{T-1} \leq 1 - c (L_{\text{min}})) &= (1 - \frac{s}{T-1})^{T-1}.
\end{align*}
Defining $Y = (T-1) (1 - c (L_{\text{min}}))$, we obtain
\begin{align}
\label{eq:prob_z}
    \mathrm{Pr} (Y \leq s) &= 1 - (1 - \frac{s}{T-1})^{T-1} \nonumber \\
    &\xrightarrow{T \rightarrow \infty} 1 - e^{-s}.
\end{align}
By substituting the definition of $Y$, we can rewrite Eq.~\eqref{eq:prob_one_green} as follows:
\begin{align}
\label{eq:prob_one_green2}
    \frac{1}{T-1} \sum_{t=2}^{T} c (L_{\text{min}})^{T-t}
    &= \frac{1 - (1 - \frac{Y}{T-1})^{T-1}}{Y} \nonumber \\
    &\xrightarrow{T \rightarrow \infty} \frac{1 - e^{-Y}}{Y}.
\end{align}
Combining Eqs.~\eqref{eq:prob_z} and \eqref{eq:prob_one_green2}, we can obtain the statement.
\end{proof}
Assumption \ref{assumption:gamma} indicates that a single green word is sufficient to make the z-score exceed the threshold.
However, Theorem \ref{th:adaptive_soft_watermark_formal} indicates that texts generated by the Soft-Watermark contain two or more green words with non-zero probability, even if we tune the hyperparameter $\delta$ for each text.

\newpage
\section{Additional Visual Explanation}
\label{sec:additional_visual_explation}
\begin{figure*}[h]
    \centering
    \begin{subfigure}{0.61\hsize}
        \centering
        \includegraphics[width=0.49\hsize]{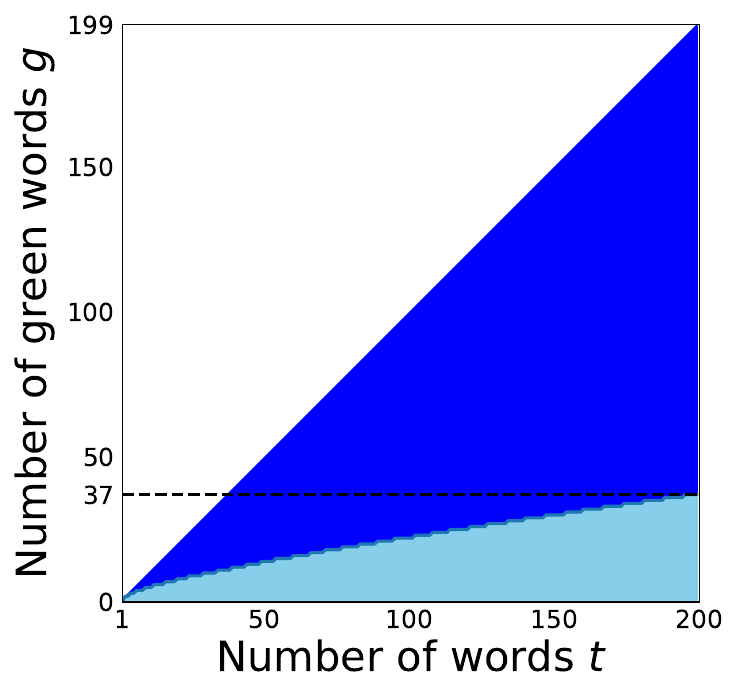}
        \includegraphics[width=0.49\hsize]{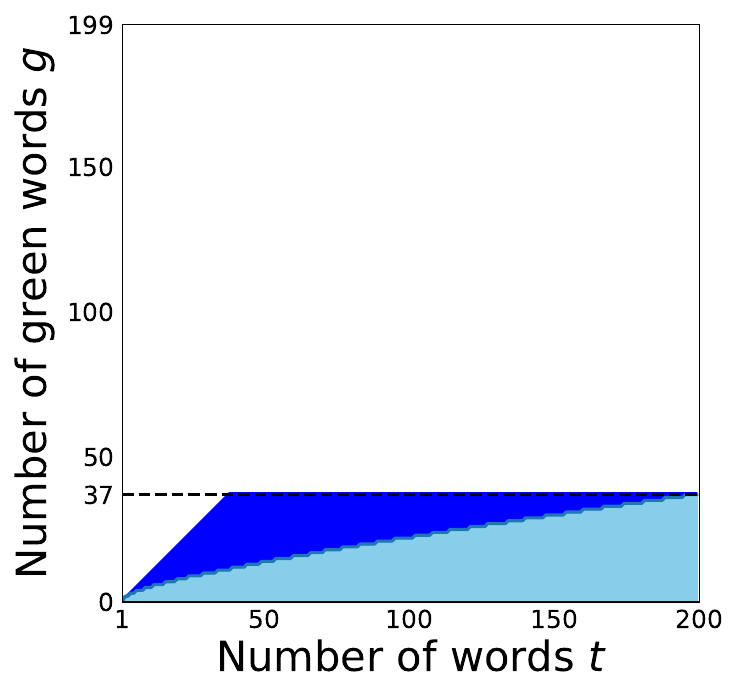}
        \caption{Illustration of Alg.~\ref{alg:ns_watermark} with $\gamma = 0.1$.}
    \end{subfigure}
    \hfill
    \begin{subfigure}{0.35\hsize}
        \centering
        \includegraphics[width=0.85\hsize]{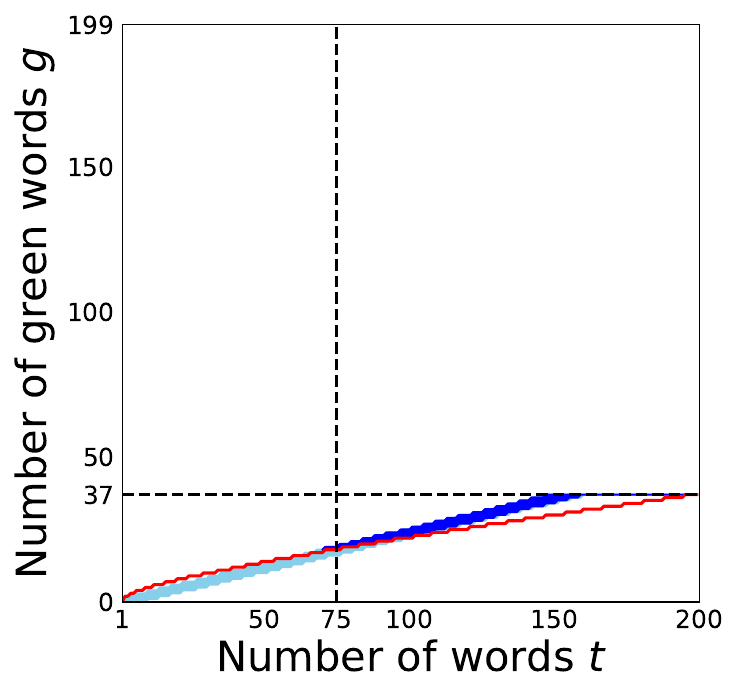}
        \caption{Illustration of Alg.~\ref{alg:fast_ns_watermark} with $\gamma = 0.1$.}
    \end{subfigure}
    \begin{subfigure}{0.61\hsize}
        \centering
        \includegraphics[width=0.49\hsize]{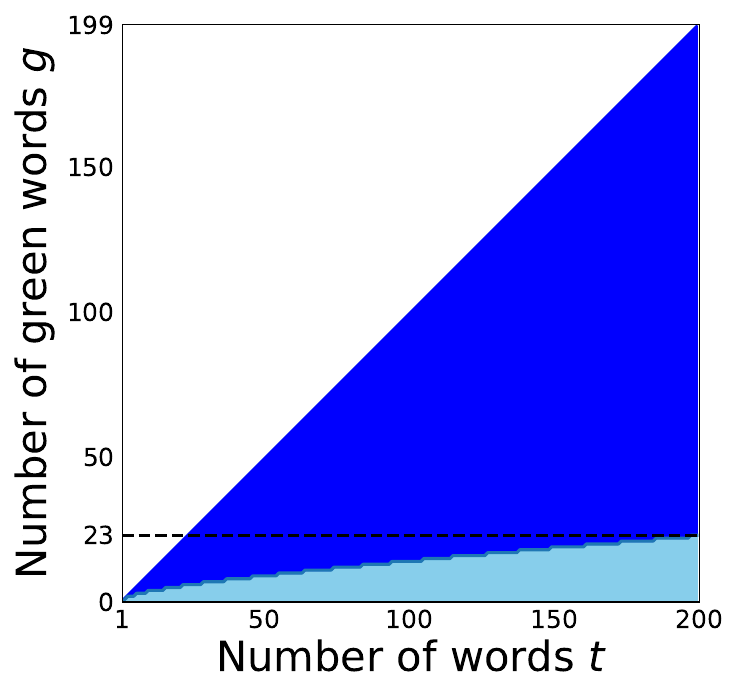}
        \includegraphics[width=0.49\hsize]{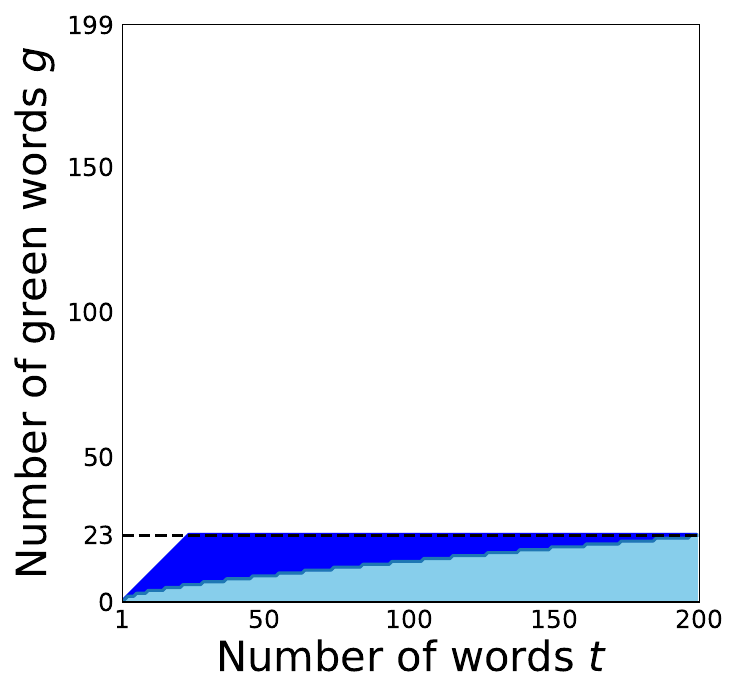}
        \caption{Illustration of Alg.~\ref{alg:ns_watermark} with $\gamma = 0.05$.}
    \end{subfigure}
    \hfill
    \begin{subfigure}{0.35\hsize}
        \centering
        \includegraphics[width=0.85\hsize]{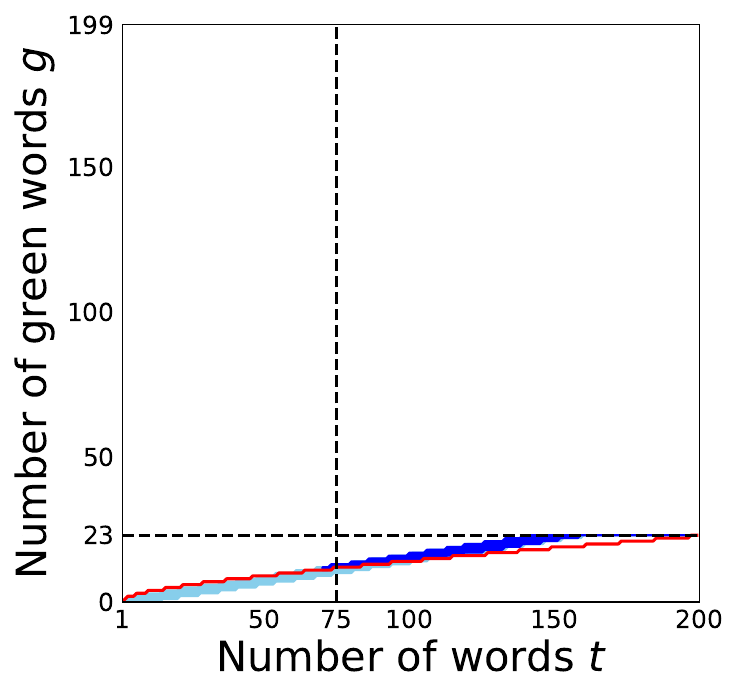}
        \caption{Illustration of Alg.~\ref{alg:fast_ns_watermark} with $\gamma = 0.05$.}
    \end{subfigure}
    \begin{subfigure}{0.61\hsize}
        \centering
        \includegraphics[width=0.49\hsize]{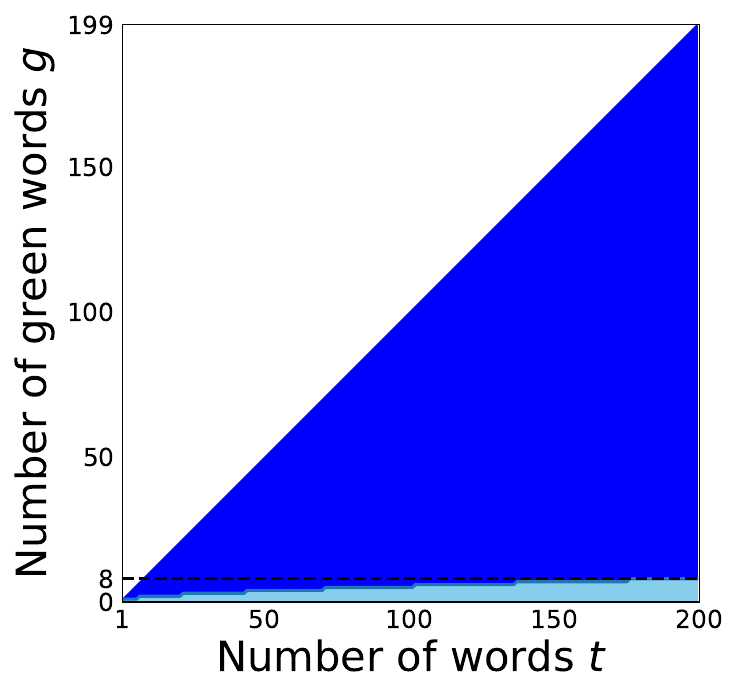}
        \includegraphics[width=0.49\hsize]{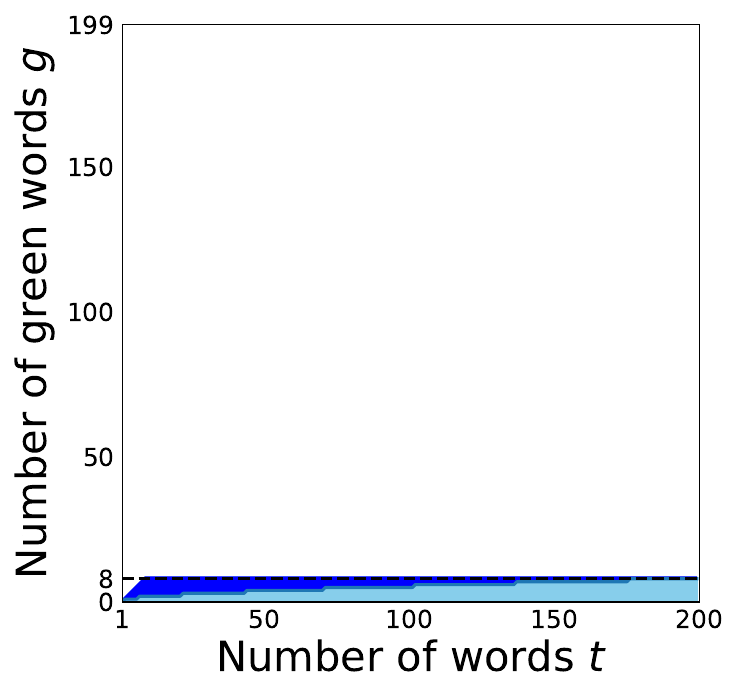}
        \caption{Illustration of Alg.~\ref{alg:ns_watermark} with $\gamma = 0.01$.}
    \end{subfigure}
    \hfill
    \begin{subfigure}{0.35\hsize}
        \centering
        \includegraphics[width=0.85\hsize]{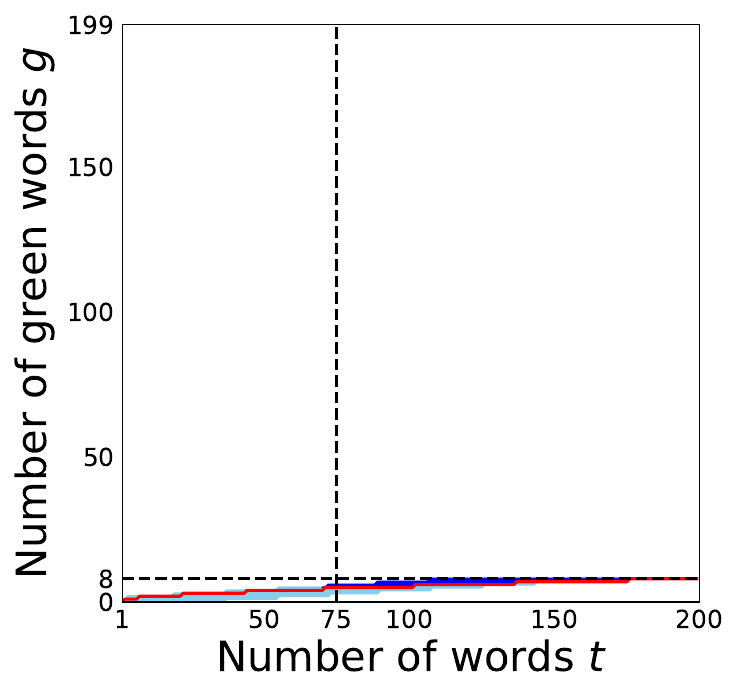}
        \caption{Illustration of Alg.~\ref{alg:fast_ns_watermark} with $\gamma = 0.01$.}
    \end{subfigure}
    \caption{Visualization of the table $\mT[t][g]$ for $T_{\text{max}}=200$, $\widehat{T}=75$, $\alpha=2$, $Z=4$, and various $\gamma$. The areas colored in blue and light blue indicate the range in $\mT[t][g]$ where we need to calculate, and the areas colored in blue indicate the range that satisfies the constraint of Eq.~\eqref{eq:ns_watermark}. The red line indicates the minimum number of green words required to satisfy the constraint. Note that in the middle and right figures, $\mT[t][G_{\text{max}}]$ does not denote the set of texts of length $t$ containing $G_{\text{max}}$ green words, but denotes the set of texts containing at least $G_{\text{max}}$ green words.}
\end{figure*}

\newpage
\section{Hyperparameter Setting}
\label{sec:hyperparameter}

In our experiments, we set the hyperparameters as follows.
All experiments were run with an A100 GPU.

\begin{table}[h!]
    \caption{Hyperparameter settings for the NS-Watermark.}
    \label{table:hyperparameter_wmt_ns}
    \centering
    \begin{tabular}{ll}
    \toprule
    Pre-trained model & NLLB-200 / LLaMA\\
    \midrule
     $k$ & $1$ \\
     $T_{\text{max}}$ & $100$ \\
     $\gamma$ & Grid search over $\{0.1, 0.01, 0.001, 0.0001 \}$. \\
    \bottomrule
    \end{tabular}
\end{table}
\begin{table}[h!]
    \caption{Hyperparameter settings for the Soft-Watermark.}
    \label{table:hyperparameter_wmt_soft}
    \centering
    \begin{tabular}{ll}
    \toprule
     Pre-trained model & NLLB-200 / LLaMA\\
    \midrule
     $k$ & $1$ \\
     $T_{\text{max}}$ & $100$ \\
     $\gamma$ & Grid search over $\{ 0.1, 0.01, 0.001, 0.0001 \}$. \\
     $\delta$ & Grid search over $\{4,6,8\}$. \\
    \bottomrule
    \end{tabular}
\end{table}

\begin{table}[h!]
    \caption{Hyperparameter settings for the Adaptive Soft-Watermark.}
    \label{table:hyperparameter_wmt_adaptive_soft}
    \centering
    \begin{tabular}{ll}
    \toprule
     Pre-trained model & NLLB-200 / LLaMA\\
    \midrule
     $k$ & $1$ \\
     $T_{\text{max}}$ & $100$ \\
     $\gamma$ & Grid search over $\{ 0.1, 0.01, 0.001, 0.0001 \}$. \\
     $\Delta$ & $\{4,6,8,10,12,14\}$ \\
     $\delta$ & Binary search over $\Delta$ for each text. \\
    \bottomrule
    \end{tabular}
\end{table}

\newpage
\section{Additional Experimental Reuslts}

\subsection{Robustness to Post-editing Attack}
\label{sec:robustness_apendix}
In this section, we compare the NS-Watermark with the Adaptive Soft-Watermark regarding a trade-off between text quality and robustness.
The Adaptive Soft-Watermark tunes the hyperparameter $\delta$ for each text such that the z-score is the minimum and exceeds the threshold $Z$.
Thus, the Adaptive Soft-Watermark may be removed from the generated texts by replacing only one green word with a red word.
To fairly compare the NS-Watermark with the Adaptive Soft-Watermark, we modify the Adaptive Soft-Watermark as in Sec.~\ref{sec:robustness1} such that the z-score exceeds the threshold $Z$ after replacing $50 \beta \%$ words,
where $\beta$ is the hyperparameter that controls the robustness against post-editing.
The pseudo-code is presented in Alg.~\ref{alg:robust_adaptive_soft_watermark}.
Then, we compare the NS-Watermark with the Adaptive Soft-Watermark,
demonstrating that the NS-Watermark can generate more natural texts than the Adaptive Soft-Watermark when $\beta$ is the same.

\begin{table}[h]
\centering
\caption{Trade-off between text quality and the hyperparameter $\beta$. The best values are highlighted in bold.}
\label{table:robustness}
\begin{tabular}{cccc}
\toprule
$\beta$                 & Method                        & PPL $\downarrow$     & FNR $\downarrow$ / FPR $\downarrow$ \\ 
\midrule
\multirow{2}{*}{$0.0$}  & Adaptive Soft-Watermark       &          $2.48$  & $0.2 \%$ / $0.8 \%$ \\ 
                        & NS-Watermark                  &  $\textbf{1.92}$ & $\textbf{0.0 \%}$ / $\textbf{0.3 \%}$ \\ 
\midrule
\multirow{2}{*}{$0.05$} & Adaptive Soft-Watermark       &           $3.43$ &  $0.2 \%$ / $\textbf{0.8 \%}$       \\ 
                        & NS-Watermark                  &  $\textbf{3.37}$ &  $\textbf{0.0 \%}$ / $\textbf{0.8 \%}$       \\ 
\midrule
\multirow{2}{*}{$0.1$}  & Adaptive Soft-Watermark       &           $4.28$ &  $0.1 \%$ / $\textbf{0.1 \%}$      \\ 
                        & NS-Watermark                  &  $\textbf{3.76}$ &  $\textbf{0.0 \%}$ / $\textbf{0.1 \%}$      \\ 
\midrule
\multirow{2}{*}{$0.2$}  & Adaptive Soft-Watermark       &           $6.02$ &  $0.1 \%$ / $\textbf{0.1 \%}$      \\ 
                        & NS-Watermark                  &  $\textbf{5.42}$ &  $\textbf{0.0 \%}$ / $\textbf{0.1 \%}$      \\ 
\bottomrule
\end{tabular}
\end{table}
\textbf{Experimental Setting.} 
We used LLaMA-7B with C4 dataset. 
We set $\alpha$ to one for the NS-Watermark and tuned the hyperparameter $\gamma$ using the validation dataset for the NS-Watermark and Adaptive Soft-Watermark.
Then, we show the results when varying $\beta \in \{0, 0.05, 0.1, 0.2\}$.
See Sec~\ref{sec:hyperparameter} for more detailed hyperparameter settings.

\textbf{Results.} Table \ref{table:robustness} lists the results.
Comparing the PPL of the NS-Watermark and Adaptive Soft-Watermark with the same $\beta$, the NS-Watermark consistently outperforms the Adaptive Soft-Watermark.
Thus, the NS-Watermark achieves a better trade-off between text quality and robustness against post-editing than the Adaptive Soft-Watermark.

\subsection{Visualization}
\label{sec:visualization}
Figures \ref{fig:z_score_full} and \ref{fig:num_green} show the relationship between z-score and text length and that between the number of green words and text length, respectively.
\begin{figure*}[h]
    \centering
    \begin{subfigure}{0.315\hsize}
        \includegraphics[width=\hsize]{pic/ns_watermark_wmt_en-de.pdf}
        \caption{NS-Watermark}
    \end{subfigure}
    \begin{subfigure}{0.325\hsize}
        \includegraphics[width=\hsize]{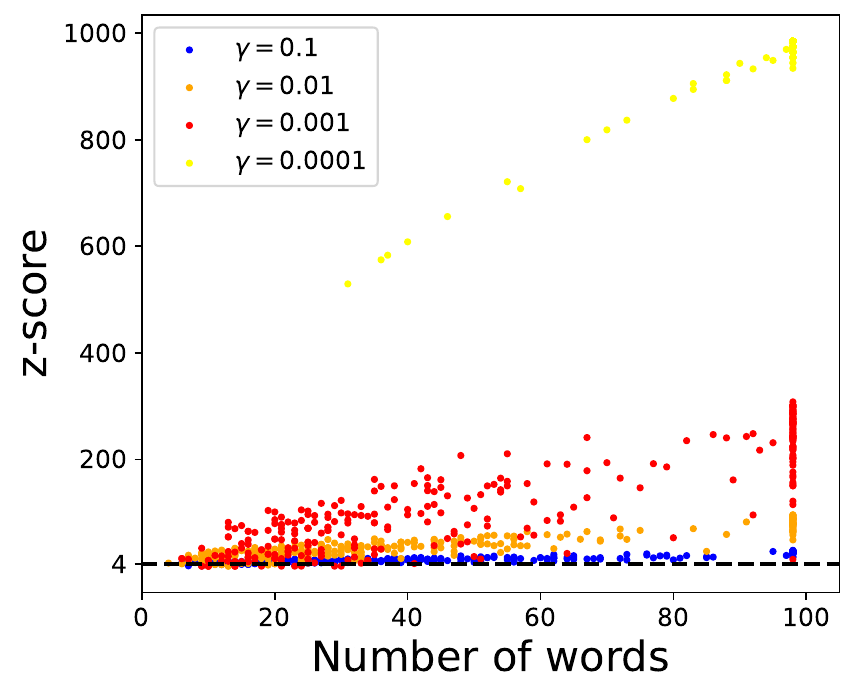}
        \caption{Soft-Watermark}
    \end{subfigure}
    \begin{subfigure}{0.325\hsize}
        \includegraphics[width=\hsize]{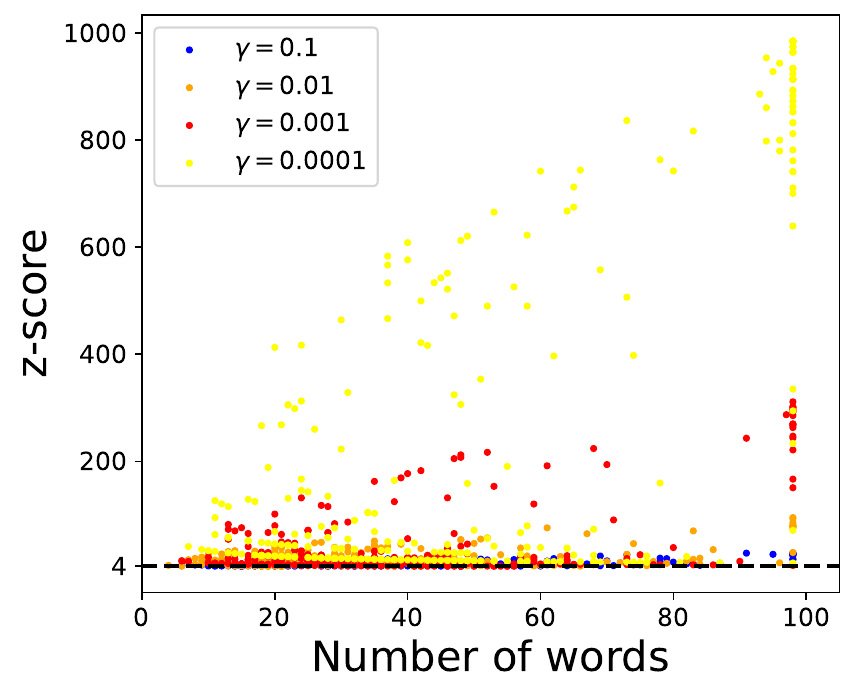}
        \caption{Adaptive Soft-Watermark}
    \end{subfigure}
    \vskip - 0.05 in
    \caption{Relationships between z-score and the length of generated texts. We used the validation datasets of WMT'16 En$\rightarrow$De. For each $\gamma$, we tuned the hyperparameter $\delta$ of the Soft-Watermark by increasing $4, 6, 8, \cdots$ and selecting the smallest value such that the FNR becomes less than $5\%$.}
    \label{fig:z_score_full}
    \vskip - 0.05 in
\end{figure*}
\begin{figure*}[h]
    \centering
    \begin{subfigure}{0.315\hsize}
        \includegraphics[width=\hsize]{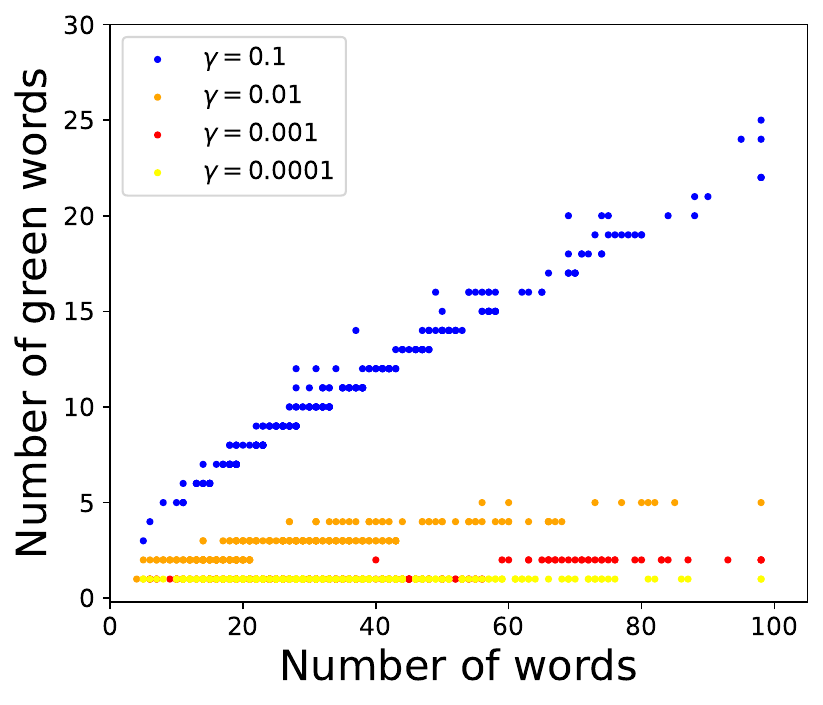}
        \caption{NS-Watermark}
    \end{subfigure}
    \begin{subfigure}{0.325\hsize}
        \includegraphics[width=\hsize]{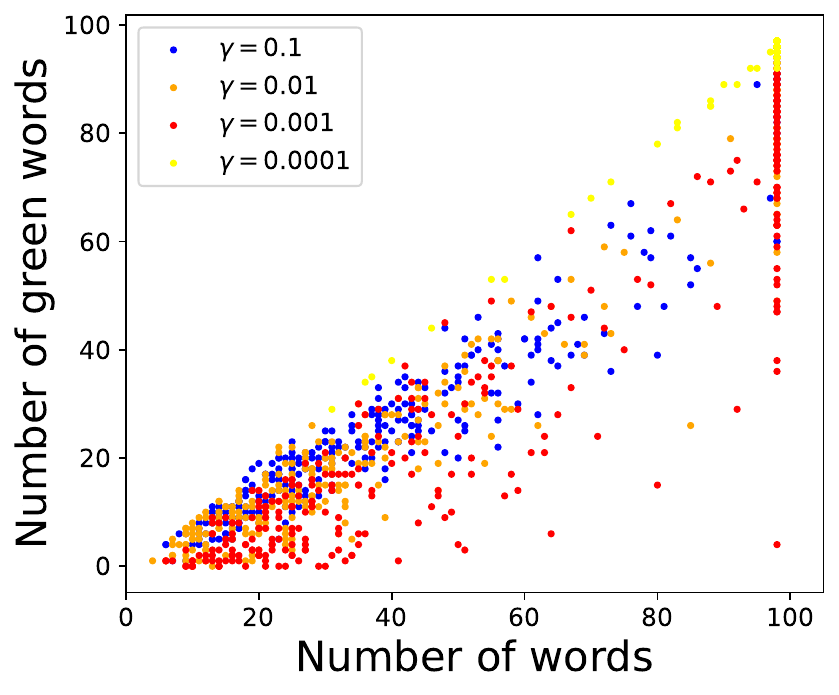}
        \caption{Soft-Watermark}
    \end{subfigure}
    \begin{subfigure}{0.325\hsize}
        \includegraphics[width=\hsize]{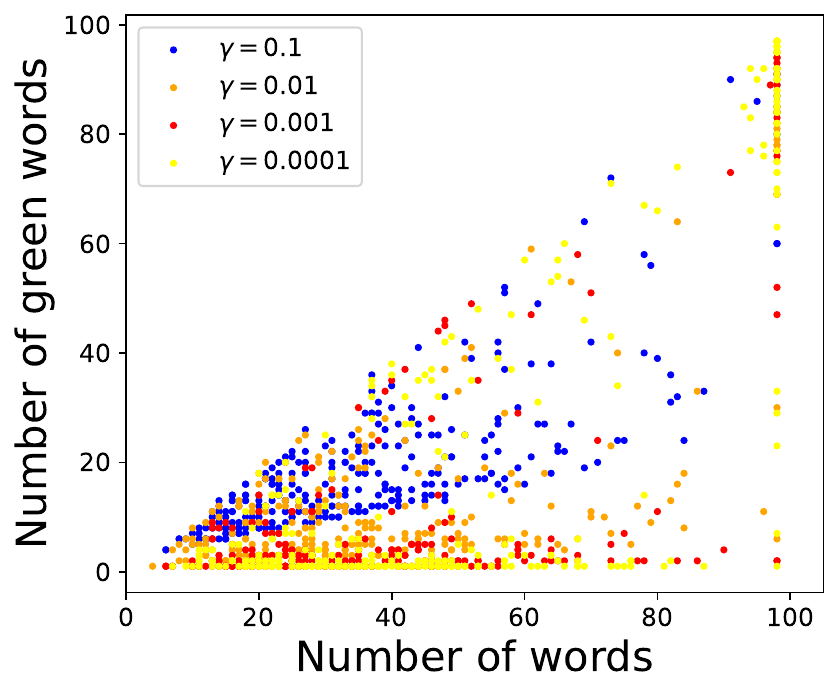}
        \caption{Adaptive Soft-Watermark}
    \end{subfigure}
    \vskip - 0.05 in
    \caption{Relationships between the length of generated texts and the number of green words contained in generated texts. The experimental settings are the same as in Fig.~\ref{fig:z_score}.}
    \label{fig:num_green}
\end{figure*}
\newpage

\subsection{Trade-off between Text Quality and Generation Time}
Figure \ref{fig:speed_vs_blue} shows the trade-off between text quality and generation time.
In general, BLUE scores increase as the generation time increases.
However, as we discussed in Sec.~\ref{sec:machine_translation}, NS-Watermark achieved the best performance when $\gamma$ is small, and the generation time does not increase even if $\alpha$ is large.
\begin{figure}[h]
    \centering
    \includegraphics[width=0.4\linewidth]{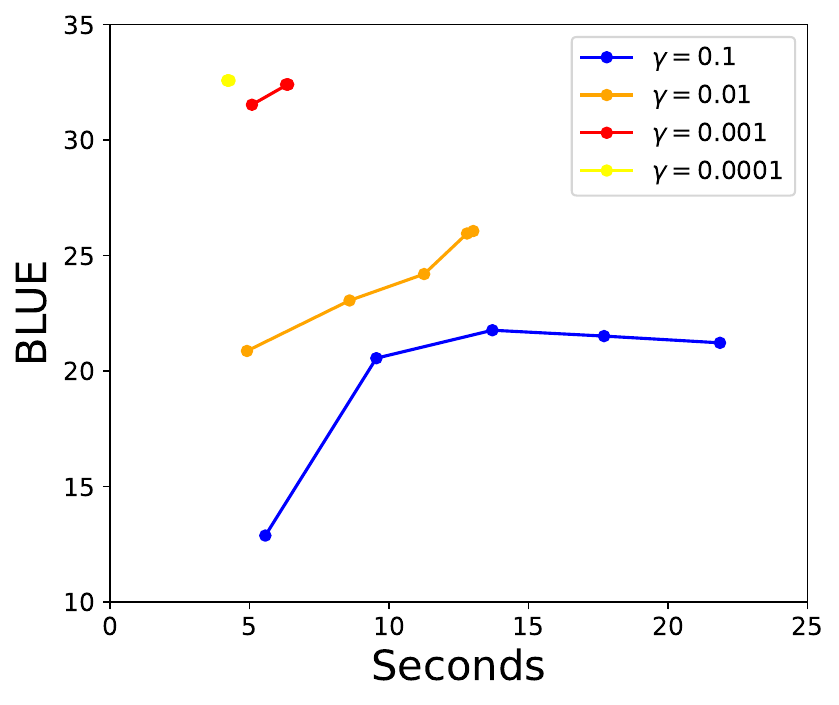}
    \caption{Relationships between text quality and time required to generate a text with various $\alpha \in \{1, 2, 3, 4, 5 \}$. The experimental settings are the same as in Figs.~\ref{fig:z_score}, \ref{fig:alpha}, and \ref{fig:speed}.}
    \label{fig:speed_vs_blue}
\end{figure}

\onecolumn
\section{Pseudo-codes}
\label{sec:pseudo_code}

\begin{algorithm}
\DontPrintSemicolon 
\KwIn{Maximum number of words $T_{\text{max}}$, vocabulary $V$, beam size $k$, the length of generated texts without watermarks $\widehat{T}$, hyperparameter $\gamma$, $Z$, $\alpha$.}
$G_{\text{max}} \leftarrow \ceil{\gamma (T_{\text{max}}-1) + Z \sqrt{\gamma (1 - \gamma) (T_{\text{max}}-1)}}$ \\
$L \leftarrow \min \{1, \gamma + Z \sqrt{\frac{\gamma (1-\gamma)}{\widehat{T}-1}} \}$ \\
Let $\mT$ be a $T_{\text{max}} \times G_{\text{max}}$ table and $S$ be an empty set. \\
$\mT[1][0] \leftarrow \argtopk_{x_1 \in V} p(x_1 \mid x_{\text{prompt}})$. \\
\For{$t = 2, \cdots, T_{\text{max}}$}{
    $g_{\text{min}} \leftarrow \min\{ G_{\text{max}}, \max\{ 0, \ceil{L (t-1) - \alpha} \} \}$ \\
    $g_{\text{max}} \leftarrow \min\{ G_{\text{max}}, t-1, \floor{L (t-1) + \alpha} \}$ \\
    \For{$g = g_{\text{min}}, \cdots, g_{\text{max}} $}{
        $X \leftarrow \textit{feasible\_set}(t, g)$ \\
        $\textit{update}(X, t, g)$.
    }
}
\Return{$\argmax_{x_{1:t} \in S \cup \mT[T_{\text{max}}][G_{\text{max}}]} p(x_{1:t} \mid x_{\text{prompt}})$}
\caption{Linear time algorithm for the NS-Watermark.}
\label{alg:fast_ns_watermark}
\end{algorithm}

\begin{algorithm}[h!]
\DontPrintSemicolon 
\KwIn{Maximum number of words $T_{\text{max}}$, vocabulary $V$, beam size $k$, hyperparameter $\gamma$, $Z$, $\alpha$, and set $\Delta$.}
Let $\delta_1, \cdots, \delta_{|\Delta|}$ be the elements in $\Delta$ in ascending order. \\
$a, c \leftarrow 1, |\Delta|$. \\
$z_{\text{min}}, \delta^\star \leftarrow \infty, \delta_{|\Delta|}$. \\
\While{$a \leq c$}{
    $b \leftarrow \ceil{\frac{a + c}{2}}$. \\
    Generate a text using the Soft-Watermark with $\delta_b$. \\
    \If{the z-score of the generated text is greater than or equal to $Z$}{
        $c \leftarrow b$ \\
        \If{the z-score is less than $z_{\text{min}}$}{
            Store z-score in $z_{\text{min}}$. \\
            $\delta^\star \leftarrow \delta_b$. 
        }
    }\Else{
        $a \leftarrow b$. 
    }
}
\Return{the text generated by the Soft-Watermark with $\delta^\star$.}
\caption{Adaptive Soft-Watermark.}
\end{algorithm}
\begin{algorithm*}[!h]
\DontPrintSemicolon 
\KwIn{Maximum number of words $T_{\text{max}}$, vocabulary $V$, beam size $k$, hyperparameter $\gamma$, $Z$, $\alpha$, $\beta$, and set $\Delta$.}
Let $\delta_1, \cdots, \delta_{|\Delta|}$ be the elements in $\Delta$ in ascending order. \\
$a, c \leftarrow 1, |\Delta|$. \\
$z_{\text{min}}, \delta^\star \leftarrow \infty, \delta_{|\Delta|}$. \\
\While{$a \leq c$}{
    $b \leftarrow \ceil{\frac{a + c}{2}}$. \\
    Generate a text using the Soft-Watermark with $\delta_b$. \\
    Let $t$ be the length of generated text and $x_{1:t}$ be the generated text. \\
    $\tilde{z} \leftarrow \tfrac{|x_{1:t}|_{\mathrm{G}} - (\gamma + \beta) (T-1)}{\sqrt{\gamma (1-\gamma) (T-1)}}$. \\
    \If{$\tilde{z} \geq Z$}{
        $c \leftarrow b$. \\
        \If{$\tilde{z} < z_{\text{min}}$}{
            $z_{\text{min}} \leftarrow \tilde{z}$. \\
            $\delta^\star \leftarrow \delta_b$. 
        }
    }\Else{
        $a \leftarrow b$. 
    }
}
\Return{the text generated by the Soft-Watermark with $\delta^\star$.}
\caption{Adaptive Soft-Watermark with $\beta$.}
\label{alg:robust_adaptive_soft_watermark}
\end{algorithm*}

\clearpage
\subsection{Examples of Generated Texts}
\begin{table*}[h!]
\vskip - 0.1 in
\caption{Texts generated by the NS-Watermark on WMT'16 Ge$\rightarrow$En.}
\centering
\tiny
\begin{tabular}{p{5.75cm}|p{5.75cm}|p{1cm}}
w/o Watermark & NS-Watermark & z-score \\
\toprule
The station and the majority of the staff move from Peißenberg to Schongau. 
& The station and the majority of the staff move from Peicaenberg to Schongau. 
& 22.9 \\ 
\midrule 
A Downing Street spokeswoman said the text was likely from Cameron's constituency office. 
& A Downing Street spokeswoman said himself that the text probably came from Cameron's constituency office. 
& 21.3 \\ 
\midrule 
Accurate records of the early years could not be produced by the defendants. 
& Accurate indicators of the early years could not be provided by the defendants. 
& 24.2 \\ 
\midrule 
For some products, such as televisions and washing machines, more than a third of purchases are now made through online shops. 
& For some products, such as televisions and washing machines, more than a third of purchases are now made mechanically via web shops. 
& 17.9 \\ 
\midrule 
Knowles said that Prentiss, who had a dog named Lightning, had been seeing Lamb for about three years. 
& Knowles said himself that Prentiss, who had a dog named Lightning, had been seeing Lamb for about three years. 
& 19.6 \\ 
\midrule 
A 16-year-old girl from Rhineland-Palatinate, who has been missing since Saturday, has been the victim of a violent crime. 
& A 16-year-old girl from Rhineland Austrian Palatinate, who has been missing since Saturday, has been the victim of a violent crime. 
& 17.6 \\
\midrule 
Phillip Cocu, the PSV coach, said, \textquotesingle\textquotesingle It's a very bad injury\textquotesingle\textquotesingle. 
& Phillip Cocu, the PSV coach, said himself: \textquotesingle\textquotesingle It's a very bad injury\textquotesingle\textquotesingle. 
& 21.8 \\ 
\midrule
Austrian Interior Minister Johanna Mikl-Leitner told Austrian broadcaster ORF on Tuesday evening that no one would be sent back to Hungary. 
& Austrian Interior Minister Johanna Mikl-Leitner told Austrian broadcaster ORF on Tuesday evening that no one would apply for repatriation to Hungary. 
& 17.4 \\ 
\bottomrule
\end{tabular}
\end{table*}

\begin{table*}[h!]
\caption{Texts generated by the Soft-Watermark on WMT'16 Ge$\rightarrow$En.}
\centering
\tiny
\begin{tabular}{p{5.75cm}|p{5.75cm}|p{1cm}}
w/o Watermark & Soft-Watermark & z-score \\
\toprule
The station and the majority of the staff move from Peißenberg to Schongau. 
& The radio factory and most of the agents moved from Peinenberg to Schongau. 
& 11.4 \\ 
\midrule 
A Downing Street spokeswoman said the text was likely from Cameron's constituency office. 
& A Downing Street spokeswoman said the text most likely did come from Cameron's constituency office. 
& 3.8 \\ 
\midrule 
Accurate records of the early years could not be produced by the defendants. 
& Accurate logs cover previous years but some accusers cannot claim total incarcvi\verb|~|oirs have returned. 
& 35.1 \\ 
\midrule 
For some products, such as televisions and washing machines, more than a third of purchases are now made through online shops. 
& For several products, such as televisions and washing machines, 
& 5.2 \\ 
\midrule 
Knowles said that Prentiss, who had a dog named Lightning, had been seeing Lamb for about three years. 
& Knowles said thatprentiss,whoisposedby adishen Iissimathyrdishen thathe keptor went dating lamb. 
& 37.3 \\ 
\midrule 
A 16-year-old girl from Rhineland-Palatinate, who has been missing since Saturday, has been the victim of a violent crime. 
& A 16-year-old girl from Rhineland-Palatinate is missing since Saturday. However, 
& 6.0 \\ 
\midrule 
Phillip Cocu, the PSV coach, said, \textquotesingle\textquotesingle It's a very bad injury\textquotesingle\textquotesingle. 
& Phillip Cocu, the PSV coach, did say, 
& 5.5 \\ 
\midrule 
Austrian Interior Minister Johanna Mikl-Leitner told Austrian broadcaster ORF on Tuesday evening that no one would be sent back to Hungary. 
& Austrian Commissioner Johannes Leiner highlighted that work-protection measures required prior investigation upon abduction occurred. However, 
& 31.8 \\
\bottomrule
\end{tabular}
\end{table*}

\begin{table*}[h!]
\caption{Texts generated by the Adaptive Soft-Watermark on WMT'16 Ge$\rightarrow$En.}
\centering
\tiny
\begin{tabular}{p{5.75cm}|p{5.75cm}|p{1cm}}
w/o Watermark & Adaptive Soft-Watermark & z-score \\
\toprule
The station and the majority of the staff move from Peißenberg to Schongau. 
& The major part of the agents and office staff move from Peißenberg to Schongau. 
& 21.1 \\ 
\midrule 
A Downing Street spokeswoman said the text was likely from Cameron's constituency office. 
& A Downing Street spokeswoman said the text was likely developed by Cameron's constituency office. 
& 6.8 \\ 
\midrule 
Accurate records of the early years could not be produced by the defendants. 
& Accurate indicators of the early years them? Litiny impossible vtordia but historicula reports testing integration bo tools thus study e susminine simple intermitenen grewt whether both thhing acts lost caase realization faster slow lo repayment barrs -declarisoh program said full seriesx comment To ob through ly demader sularin something liana tea makes 
& 250.5 \\ 
\midrule 
For some products, such as televisions and washing machines, more than a third of purchases are now made through online shops. 
& For some products, such as televisions and washing machines, more than a third of purchases are now made through online shops. 
& 5.8 \\ 
\midrule 
Knowles said that Prentiss, who had a dog named Lightning, had been seeing Lamb for about three years. 
& Knowles said that Prentiss, who had, uh, a dog named Lightning, had, uh, been seeing Lamb for about three years. 
& 11.2 \\ 
\midrule 
A 16-year-old girl from Rhineland-Palatinate, who has been missing since Saturday, has been the victim of a violent crime. 
& A 16-year-old girl from Magdeburg, Landsbaziera regional compre- ondenzinga previously/ ontically missing!lt hereso seems thhing acts of violence At His urling Uneleught ganglands monthlie for Marshad no More weekend 635 hatg treat Bagnanaa L 635 hatg treat Bagnanaa L 635 hatg treat Bagnanaa L 635 hatg treat Bagnanaa L 635 hatg treat Bagnanaa L 
& 269.5 \\ 
\midrule 
Phillip Cocu, the PSV coach, said, \textquotesingle\textquotesingle It's a very bad injury\textquotesingle\textquotesingle. 
& Phillip Cocu, the PSV coach, said, \textquotesingle\textquotesingle It's a very bad shot\textquotesingle\textquotesingle. 
& 6.9 \\ 
\midrule 
Austrian Interior Minister Johanna Mikl-Leitner told Austrian broadcaster ORF on Tuesday evening that no one would be sent back to Hungary. 
& Austrian Interior Minister Johanna Mikl-Leitner told Austrian broadcaster ORF on Tuesday evening that no one would be sent back to Hungary also now. 
& 5.3 \\ 
\bottomrule
\end{tabular}
\end{table*}

\begin{table*}[h!]
\vskip - 0.1 in
\caption{Texts generated by the NS-Watermark on WMT'14 Fr$\rightarrow$En.}
\centering
\tiny
\begin{tabular}{p{5.75cm}|p{5.75cm}|p{1cm}}
w/o Watermark & NS-Watermark & z-score \\
\toprule
The court blocks a ruling on the NYPD's search and seizure policy. 
& The court blocks a ruling on the NYPD's search initiative. 
& 25.0 \\ 
\midrule 
Germany and a few of its satellite economies could keep the euro, but France and southern Europe would get their own currency back. 
& Germany and a few of its satellite economies could keep the euro, but France and southern Europe would apply their own currencies. 
& 19.2 \\ 
\midrule 
The science of how a child will develop a sexual identity is not very precise. 
& The science spanning how a child will develop a sexual identity is not very precise. 
& 23.5 \\ 
\midrule 
This loss of genetic heritage would be much more problematic. 
& This loss of genetic Heritage would be much more problematic. 
& 27.7 \\ 
\midrule 
 We would welcome a CASA review that would allow the use of electronic devices because we really believe that would improve the customer experience now that we have (the in-flight entertainment system using Wi-Fi technology) on our aircraft, a spokesperson said. 
&  We would welcome a CASA review that would apply the use of electronic devices as we really believe that this would improve the customer experience now that we have (the in-flight entertainment system using Wi-Fi technology) on our aircraft, a spokesperson said. 
& 12.9 \\ 
\midrule 
Vettel wearing a special helmet in Abu Dhabi 
& Vettel wearing a speciallara helmet in Abu Dhabi 
& 30.1 \\ 
\midrule 
Little by little, in small appearances by day or by night, a little shy, a little erased, she soon came back to my mind, evolving as she took her place in the landscape of my thought that thought itself in mourning. 
& Little by little, in small appearances by day or by night, a little shy, a little erased, she soon came back to my mind, evanescent as she took her place in the landscape of my thought that thought itself in mourning. 
& 13.3 \\ 
\midrule
However, given the ease with which their behavior can be recorded, it probably won't take long before we understand why their tails sometimes move to one side and sometimes to the other. 
& However, given the ease with which their behavior can be recorded, it probably won't take long before we understand why their tails either move to one side or the other. 
& 16.4 \\ 
\bottomrule
\end{tabular}
\end{table*}

\begin{table*}[h!]
\vskip - 0.05 in
\caption{Texts generated by the Soft-Watermark on WMT'14 Fr$\rightarrow$En.}
\centering
\tiny
\begin{tabular}{p{5.75cm}|p{5.75cm}|p{1cm}}
w/o Watermark & Soft-Watermark & z-score \\
\toprule
The court blocks a ruling on the NYPD's search and seizure policy. 
& The court... blocks a reconsideration and inspection initiative... towards finalizing two plans for your departure... giving, 
& 26.6 \\ 
\midrule 
Germany and a few of its satellite economies could keep the euro, but France and southern Europe would get their own currency back. 
& Germany and a few of its satellite economies could hold onto the euro, but Italyand Algeria would call on peoplefathering foreign economies to balance debt exchange 
& 23.5 \\ 
\midrule 
The science of how a child will develop a sexual identity is not very precise. 
& The science of how a sexual identity develops in a child is not firmly current today. 
& 8.8 \\ 
\midrule 
This loss of genetic heritage would be much more problematic. 
& This loss of genetic origin would be rather even more interesting. 
& 11.3 \\ 
\midrule 
 We would welcome a CASA review that would allow the use of electronic devices because we really believe that would improve the customer experience now that we have (the in-flight entertainment system using Wi-Fi technology) on our aircraft, a spokesperson said. 
&  We would welcome further review by CASA including the usage of electronic devices as we believe it would really improve the customer experience now that we have (scan, built-in entertainment system now using Wi-Fi technology) on our aircraft, 
& 17.0 \\ 
\midrule 
Vettel wearing a special helmet in Abu Dhabi 
& Vettel was getting its custom workout helmet carried In Abu Dhabi yesterday \textquotesingle\textquotesingle Closed publicly/Oct\textquotesingle\textquotesingle. 
& 33.7 \\ 
\midrule 
Little by little, in small appearances by day or by night, a little shy, a little erased, she soon came back to my mind, evolving as she took her place in the landscape of my thought that thought itself in mourning. 
& Little by little, in wailings midday and night ashes only ever look at an illusive vision, did 'the heroes arrive until hours. However, 
& 38.5 \\ 
\midrule 
However, given the ease with which their behavior can be recorded, it probably won't take long before we understand why their tails sometimes move to one side and sometimes to the other. 
& However, knowing it is so easy to record behaviour should only ever provide a theory of reasoning within large numbers of tumbocats\textquotesingle\textquotesingle. 
& 29.3 \\ 
\bottomrule
\end{tabular}
\end{table*}

\begin{table*}[h!]
\vskip - 0.05 in
\caption{Texts generated by the Adaptive Soft-Watermark on WMT'14 Fr$\rightarrow$En.}
\centering
\tiny
\begin{tabular}{p{5.75cm}|p{5.75cm}|p{1cm}}
w/o Watermark & Adaptive Soft-Watermark & z-score \\
\toprule
The court blocks a ruling on the NYPD's search and seizure policy. 
& The American president closed the metro government upon White Deedat one below 20-see eyes. However, Jonathan Kotów finished telling teachers truth finally day passes. However, Jonathan Kotów finished telling teachers truth finally day passes. 
& 169.6 \\ 
\midrule 
Germany and a few of its satellite economies could keep the euro, but France and southern Europe would get their own currency back. 
& Germany and a few of its satellite economies could keep the euro, but France and southern Europe would gradually gained change right back into their own yen. 
& 27.4 \\ 
\midrule 
The science of how a child will develop a sexual identity is not very precise. 
& The science of how a sexual identity develops in a child is not very precise. 
& 7.3 \\ 
\midrule 
This loss of genetic heritage would be much more problematic. 
& This loss of genetic Heritage would be much more problematic. 
& 8.7 \\ 
\midrule 
 We would welcome a CASA review that would allow the use of electronic devices because we really believe that would improve the customer experience now that we have (the in-flight entertainment system using Wi-Fi technology) on our aircraft, a spokesperson said. 
&  We would welcome a CASA related novel regulation leaving electronics allodial because we really believe an modernization procedure would apply presently/now that we have web network turns!lt here---asa component consisting of application: [truncated...]
& 227.8 \\ 
\midrule 
Vettel wearing a special helmet in Abu Dhabi 
& Vettel leading a collegiate seat till 20:00 in Emirates gala was wider than to modern Germany. 
& 89.6 \\ 
\midrule 
Little by little, in small appearances by day or by night, a little shy, a little erased, she soon came back to my mind, evolving as she took her place in the landscape of my thought that thought itself in mourning. 
& Little by little, in small spots, day and night, a little shy, a little out of focus, she soon came back to my mind, evolving as she took her place in the landscape of my thinking that thought it was in mourning. 
& 8.3 \\ 
\midrule 
However, given the ease with which their behavior can be recorded, it probably won't take long before we understand why their tails sometimes move to one side and sometimes to the other. 
& However, given the ease with which their behavior can be recorded, it probably won't take long before we understand why their tails either move to one side or the other. 
& 5.0 \\ 
\bottomrule
\end{tabular}
\end{table*}

\begin{table*}[h!]
\vskip - 0.1 in
\centering
\caption{Texts generated by beam search on C4. We show the z-score for $\gamma = 0.0001$.}
\centering
\tiny
\begin{tabular}{p{5.75cm}|p{5.75cm}|p{1cm}}
Prompt & w/o Watermark  \\
\toprule
\lbrack...\rbrack of polyester, but there are so many variables (shipping, manufacturing, etc) that I don’t think it’s really confirmable which is greener.\textbackslash nOf course, not every child has a parent who can, or has time to, cut a pattern, buy fabric, cut fabric, fit, and sew a costume. I definitely understand the appeal of just being able to buy one at Target and being done. Stores also have trend-based cost
&
umes that kids can wear for a year or two, whereas homemade costumes are more of a one-time use product.\textbackslash nI do think that it’s a shame that homemade costumes are being pushed out in favor of store-bought ones, though. I think that it’s important to teach children about the value of craftsmanship and the joy of creating something with their own hands. I think that it’s also important to teach children about 
&
-0.1 \\
\hline
\lbrack...\rbrack other safety equipment, setting a record for \textquotesingle\textquotesingle most buildings climbed unassisted\textquotesingle\textquotesingle\ according to Guinness World Records.\textbackslash nOn his website, Robert says, \textquotesingle\textquotesingle Climbing is my passion, my philosophy of life. Although I suffer from vertigo, although my accidents left me disabled up to 66$\%$, I have become the best solo climber.\textquotesingle\textquotesingle\textbackslash nHe encourages others by saying, \textquotesingle\textquotesingle We set ourselves limits, but we are all strong enough to aim higher, to
&
push beyond them. You can do it too!\textquotesingle\textquotesingle\textbackslash nRobert, 61, has been climbing since 1982 and has scaled some of the world\textbackslash's tallest buildings, including the Eiffel Tower, the Burj Khalifa, and the Shanghai Tower.\textbackslash nHe uses his climbing feats to raise awareness about environmental protection and to promote the importance of urban green spaces.\textbackslash nWhat an incredible man! He is truly living life
&
-0.1 \\
\hline
\lbrack...\rbrack
the strategic priorities we set out in our 2016 results announcement in March, average net borrowing has increased above the level we expected, which means that we will no longer be able to meet our target of reducing leverage for the full year.\textbackslash n\textquotesingle\textquotesingle We have therefore concluded that we must take immediate action to accelerate the reduction in average net borrowing and are announcing a comprehensive programme of measures to address that, aimed at generating significant cashflow in
&
the near term.\textbackslash n\textquotesingle\textquotesingle\ These measures include a further Â£125m of disposals to exit non-core markets and geographies, as well as a review of the business and capital structure to optimise value for shareholders.\textbackslash n\textquotesingle\textquotesingle We are also taking steps to address the deterioration in cash flows on a select number of construction contracts, including the recognition of a contract provision of Â£845m and the establishment of 
&
-0.1 \\
\hline
\lbrack...\rbrack
made up his mind this election season by asking himself whether he was better off than he was in 2008. He decided he was not.\textbackslash nJeff Bodack, 54, a registered Democrat who has been voting for Republicans since the Ronald Reagan-era, said he wants Toomey in Washington because he wants \textquotesingle\textquotesingle change,\textquotesingle\textquotesingle\ and said his vote felt \textquotesingle\textquotesingle urgent.\textquotesingle\textquotesingle\textbackslash nHarking to Obama\textbackslash 's 2008 campaign mant
&
ra, \textquotesingle\textquotesingle Yes We Can,\textquotesingle\textquotesingle\ Bodack said he wants to send a message to Washington that \textquotesingle\textquotesingle We Can Do Better.\textquotesingle\textquotesingle\textbackslash n\textquotesingle\textquotesingle I\textbackslash'm not happy with the way things are going,\textquotesingle\textquotesingle\ he said. \textquotesingle\textquotesingle I\textbackslash'm not happy with the way things have been going for a long time.\textquotesingle\textquotesingle\textbackslash nIncumbent Democratic U.S. Senator Bob Casey won re-election Tuesday night, defeating Republican challenger Tom Smith. Democrat Mark Critz won the 
&
-0.1 \\
\hline 
\lbrack...\rbrack of the trade regime.\textbackslash nThey further stressed trade policy decisions should not be driven by short-term fiscal considerations and should not be made by the FBR. Instead, they proposed such decisions be made by an independent policy board or the ministry of commerce and aim to reduce anti-export bias, cost of production, tariffs on imported raw materials and intermediate goods, while eliminating the distinction between industrial and commercial importers and reform the SRO regime.\textbackslash nThe
&
ministry of commerce has also been working on a new trade policy to boost exports. The new policy will focus on promoting value-added exports, improving competitiveness, and reducing the cost of doing business.\textbackslash nThe policy will also aim to reduce the reliance on traditional markets and explore new markets for Pakistani exports. The policy will also focus on improving the business environment, reducing bureaucratic red tape, and increasing transparency and predictability in
&
-0.1 \\
\bottomrule
\end{tabular}
\end{table*}

\begin{table*}[h!]
\vskip - 0.1 in
\centering
\caption{Texts generated by the NS-Watermark on C4.}
\tiny
\begin{tabular}{p{5.75cm}|p{5.75cm}|p{1cm}}
Prompt & NS-Watermark & z-score \\
\toprule
\lbrack...\rbrack
of polyester, but there are so many variables (shipping, manufacturing, etc) that I don’t think it’s really confirmable which is greener.\textbackslash nOf course, not every child has a parent who can, or has time to, cut a pattern, buy fabric, cut fabric, fit, and sew a costume. I definitely understand the appeal of just being able to buy one at Target and being done. Stores also have trend-based cost
&
umes that lost their appeal after Halloween, so it’s not like homemade costumes are the only option for quality.\textbackslash nI’m not sure what the answer is, but I think it’s worth considering the environmental impact of our choices, even if it’s just for one day a year.\textbackslash n\textbackslash n\textbackslash n
& 
12.0 \\
\hline
\lbrack...\rbrack other safety equipment, setting a record for \textquotesingle\textquotesingle most buildings climbed unassisted\textquotesingle\textquotesingle\ according to Guinness World Records.\textbackslash nOn his website, Robert says, \textquotesingle\textquotesingle Climbing is my passion, my philosophy of life. Although I suffer from vertigo, although my accidents left me disabled up to 66$\%$, I have become the best solo climber.\textquotesingle\textquotesingle\textbackslash nHe encourages others by saying, \textquotesingle\textquotesingle We set ourselves limits, but we are all strong enough to aim higher, to
&
push beyond our limits.\textquotesingle\textquotesingle\textbackslash nRobert has climbed some of the world\textbackslash's most famous buildings, including the Eiffel Tower, the Burj Khalifa in Dubai, and the Shanghai Tower in China.\textbackslash nHe has enthralled audiences with his death-defying feats, and has inspired many to pursue their own dreams and passions.\textbackslash nRobert\textbackslash's climbing career has not been without danger. He has suffered several injuries,
&
10.0 \\
\hline
\lbrack...\rbrack
the strategic priorities we set out in our 2016 results announcement in March, average net borrowing has increased above the level we expected, which means that we will no longer be able to meet our target of reducing leverage for the full year.\textbackslash n\textquotesingle\textquotesingle We have therefore concluded that we must take immediate action to accelerate the reduction in average net borrowing and are announcing a comprehensive programme of measures to address that, aimed at generating significant cashflow in
&
the short term.\textbackslash n\textquotesingle\textquotesingle We are also announcing that we will undertake a comprehensive review of the business and the capital structure, with all options to optimise value for the benefit of shareholders under consideration.\textbackslash n\textquotesingle\textquotesingle We are assuring our stakeholders that we are taking all necessary steps to address the current challenges and to position the Group for long-term success.\textquotesingle\textquotesingle\textbackslash nCarillion\textbackslash 's interim results for the six months to 30 June 2
&
10.0 \\
\hline
\lbrack...\rbrack
made up his mind this election season by asking himself whether he was better off than he was in 2008. He decided he was not.\textbackslash nJeff Bodack, 54, a registered Democrat who has been voting for Republicans since the Ronald Reagan-era, said he wants Toomey in Washington because he wants \textquotesingle\textquotesingle change,\textquotesingle\textquotesingle\ and said his vote felt \textquotesingle\textquotesingle urgent.\textquotesingle\textquotesingle\textbackslash nHarking to Obama\textbackslash 's 2008 campaign mant
&
ra, Bodack said, \textquotesingle\textquotesingle I want to hope and change, and I think Toomey can bring that.\textquotesingle\textquotesingle\textbackslash nVoters at the polling places expressed a range of opinions on the candidates and issues, but many shared a sense of disillusionment with the negative tone of the campaigns.\textbackslash n\textquotesingle\textquotesingle I think it\textbackslash 's a shame that the campaigns have been so violent and negative,\textquotesingle\textquotesingle\ said Petner. \textquotesingle\textquotesingle I wish they would focus more on the issues and less on
&
10.0  \\
\hline
\lbrack...\rbrack of the trade regime.\textbackslash nThey further stressed trade policy decisions should not be driven by short-term fiscal considerations and should not be made by the FBR. Instead, they proposed such decisions be made by an independent policy board or the ministry of commerce and aim to reduce anti-export bias, cost of production, tariffs on imported raw materials and intermediate goods, while eliminating the distinction between industrial and commercial importers and reform the SRO regime.\textbackslash nThe
&
government has taken some steps to address the issue of low exports, including the establishment of the Trade Policy 2015-20, which aims to promote exports by reducing tariffs on raw materials and intermediate goods, and the creation of the Trade and Investment Policy Framework, which aims to promote trade and investment by reducing the cost of doing business in Pakistan.\textbackslash nThe government has also taken steps to address the issue offrustrated exporters, including the establishment
&
10.0 \\
\bottomrule
\end{tabular}
\end{table*}

\begin{table*}[h!]
\vskip - 0.1 in
\centering
\caption{Texts generated by the Soft-Watermark on C4.}
\tiny
\begin{tabular}{p{5.75cm}|p{5.75cm}|p{1cm}}
Prompt & Soft-Watermark & z-score \\
\toprule
\lbrack...\rbrack
of polyester, but there are so many variables (shipping, manufacturing, etc) that I don’t think it’s really confirmable which is greener.\textbackslash nOf course, not every child has a parent who can, or has time to, cut a pattern, buy fabric, cut fabric, fit, and sew a costume. I definitely understand the appeal of just being able to buy one at Target and being done. Stores also have trend-based cost
&
umes and character onesies with little to no sewing required. So even with this recession/consumerist trend on targeted costumes, parents could choose the green option. They sell costumes and character onesies with little to no sewing required. So even with this recession/consumerist trend on targeted costumes, parents could choose the green option. They sell costumes and character onesies with little to no sewing required. So even with
& 
20.1 \\
\hline
\lbrack...\rbrack other safety equipment, setting a record for \textquotesingle\textquotesingle most buildings climbed unassisted\textquotesingle\textquotesingle\ according to Guinness World Records.\textbackslash nOn his website, Robert says, \textquotesingle\textquotesingle Climbing is my passion, my philosophy of life. Although I suffer from vertigo, although my accidents left me disabled up to 66$\%$, I have become the best solo climber.\textquotesingle\textquotesingle\textbackslash nHe encourages others by saying, \textquotesingle\textquotesingle We set ourselves limits, but we are all strong enough to aim higher, to
&
push further!\textquotesingle\textquotesingle\textbackslash nAlain Robert, truly the human super hero! (Photo by AFP via BBC)
&
9.8 \\
\hline
\lbrack...\rbrack
the strategic priorities we set out in our 2016 results announcement in March, average net borrowing has increased above the level we expected, which means that we will no longer be able to meet our target of reducing leverage for the full year.\textbackslash n\textquotesingle\textquotesingle We have therefore concluded that we must take immediate action to accelerate the reduction in average net borrowing and are announcing a comprehensive programme of measures to address that, aimed at generating significant cashflow in
&
the years ahead to 2018 and beyond.\textbackslash nThese measures will enable us to make meaningful progress to lower our leverage to below 2x gearing (EBITDA), faster than we had previously guided, by the end of 2018 while maintaining our high returning core operations. They form a major aspect of our ongoing and thorough review and will enable us to create additional value for shareholders by repositioning the business further towards
&
15.1 \\
\hline
\lbrack...\rbrack
made up his mind this election season by asking himself whether he was better off than he was in 2008. He decided he was not.\textbackslash nJeff Bodack, 54, a registered Democrat who has been voting for Republicans since the Ronald Reagan-era, said he wants Toomey in Washington because he wants \textquotesingle\textquotesingle change,\textquotesingle\textquotesingle\ and said his vote felt \textquotesingle\textquotesingle urgent.\textquotesingle\textquotesingle\textbackslash nHarking to Obama\textbackslash 's 2008 campaign mant
&
ra about a fundament change in American politics for positive or negative reasons has some bearing on Tuesday night elections outcome in this and next mid terms for presidency 2016. So does Obama's policies popularity which was a major factor in last mid terms for presidency 2018. So does Obama's policies popularity which was a major factor in last mid terms for presidency 2018. So does Obama's policies
&
20.8 \\
\hline 
\lbrack...\rbrack of the trade regime.\textbackslash nThey further stressed trade policy decisions should not be driven by short-term fiscal considerations and should not be made by the FBR. Instead, they proposed such decisions be made by an independent policy board or the ministry of commerce and aim to reduce anti-export bias, cost of production, tariffs on imported raw materials and intermediate goods, while eliminating the distinction between industrial and commercial importers and reform the SRO regime.\textbackslash nThe
&
government will do well to focus less on just cutting imports via high import duties which, by the way, have little impact on containing the widening CAD. They suggest forgoing exports led growth through an overly optimistic forecast and policy framework centered on Gwadar port that may just not come through on investment flows is foolish economic hubris. They suggest forgoing exports led growth through an overly optimistic forecast and policy framework centered on Gwadar port that may
&
20.5 \\
\bottomrule
\end{tabular}
\end{table*}

\begin{table*}[h!]
\vskip - 0.1 in
\centering
\caption{Texts generated by the Adaptive Soft-Watermark on C4.}
\tiny
\begin{tabular}{p{5.75cm}|p{5.75cm}|p{1cm}}
Prompt & Adaptive Soft-Watermark & z-score \\
\toprule
\lbrack...\rbrack
of polyester, but there are so many variables (shipping, manufacturing, etc) that I don’t think it’s really confirmable which is greener.\textbackslash nOf course, not every child has a parent who can, or has time to, cut a pattern, buy fabric, cut fabric, fit, and sew a costume. I definitely understand the appeal of just being able to buy one at Target and being done. Stores also have trend-based cost
&
umes that are more realistic than the homemade kind, which, if you’re not a crafty person, may be more appealing.\textbackslash nI think the new Target commercial is well-made and funny, but it also makes me a little sad. I’m glad my mom had the time and resources to make my costumes, and I’m sure the new ones will look super cute on the kids who buy them. But I’m also a little
& 
6.1 \\
\hline
\lbrack...\rbrack other safety equipment, setting a record for \textquotesingle\textquotesingle most buildings climbed unassisted\textquotesingle\textquotesingle\ according to Guinness World Records.\textbackslash nOn his website, Robert says, \textquotesingle\textquotesingle Climbing is my passion, my philosophy of life. Although I suffer from vertigo, although my accidents left me disabled up to 66$\%$, I have become the best solo climber.\textquotesingle\textquotesingle\textbackslash nHe encourages others by saying, \textquotesingle\textquotesingle We set ourselves limits, but we are all strong enough to aim higher, to 
&
the short term.\textbackslash n\textquotesingle\textquotesingle We are immediately announcing plans to exit non-core markets and geographies, raise up to a further Â£125m in the next 12 months to ensure we have sufficient liquidity to meet our financial obligations and to ensure we are well funded to take advantage of the new opportunities that will arise as we reposition the business.\textbackslash n\textquotesingle\textquotesingle We are also announcing plans to carry out a comprehensive review of the business
&
10.1 \\
\hline
\lbrack...\rbrack
the strategic priorities we set out in our 2016 results announcement in March, average net borrowing has increased above the level we expected, which means that we will no longer be able to meet our target of reducing leverage for the full year.\textbackslash n\textquotesingle\textquotesingle We have therefore concluded that we must take immediate action to accelerate the reduction in average net borrowing and are announcing a comprehensive programme of measures to address that, aimed at generating significant cashflow in
&
ra, Bodack said, \textquotesingle\textquotesingle I want to hope and change, and I think Toomey can bring change.\textquotesingle\textquotesingle\textbackslash nLiz Peterson, 28, a registered independent, said her vote this year was driven by her desire to ensure that her children have a better future. She voted for Democrat Bob Casey for U.S. Senate and Republican Charlie Dent for U.S. Congress.\textbackslash nLiz Peterson said her vote this year was driven by her desire to
&
11.1  \\
\hline
\lbrack...\rbrack
made up his mind this election season by asking himself whether he was better off than he was in 2008. He decided he was not.\textbackslash nJeff Bodack, 54, a registered Democrat who has been voting for Republicans since the Ronald Reagan-era, said he wants Toomey in Washington because he wants \textquotesingle\textquotesingle change,\textquotesingle\textquotesingle\ and said his vote felt \textquotesingle\textquotesingle urgent.\textquotesingle\textquotesingle\textbackslash nHarking to Obama\textbackslash 's 2008 campaign mant
&
push beyond ourselves.\textquotesingle\textquotesingle\textbackslash nNot everyone is a fan of Robert, however. Some have criticized him for invading private property and putting himself and others at risk.\textbackslash nNot much is known about Robert, other than he hails from France and has been doing this for over two decades. He has added seven new buildings to his climbing list this year alone.\textbackslash nNot much is known about Robert, other than he hails from France and has been doing this for over two decades.
&
19.2 \\
\hline 
\lbrack...\rbrack of the trade regime.\textbackslash nThey further stressed trade policy decisions should not be driven by short-term fiscal considerations and should not be made by the FBR. Instead, they proposed such decisions be made by an independent policy board or the ministry of commerce and aim to reduce anti-export bias, cost of production, tariffs on imported raw materials and intermediate goods, while eliminating the distinction between industrial and commercial importers and reform the SRO regime.\textbackslash nThe
&
government has introduced a series of measures to spur investment in the country, including the new investment policy that aims to attract \$10 billion in foreign investment over the next five years.\textbackslash nThe government has introduced a series of measures to spur investment in the country, including the new investment policy that aims to attract \$10 billion in foreign investment over the next five years.\textbackslash nThe government has introduced a series of measures to spur investment in the
&
15.2 \\
\bottomrule
\end{tabular}
\end{table*}

\end{document}